\newcommand{\cD}{\mathcal{D}}
\newcommand{\cG}{\mathcal{G}}
\newcommand{\cH}{\mathcal{H}}
\newcommand{\cX}{\mathcal{X}}
\newcommand{\cY}{\mathcal{Y}}
\newcommand{\nats}{\mathbb{N}}
\newcommand{\half}{{\frac12}}
\newcommand{\err}{\mathrm{err}}
\newcommand{\norm}[1]{\|#1\|}
\DeclareMathOperator*{\argmin}{argmin}
\DeclareMathOperator*{\argmax}{argmax}
\newcommand{\dotprod}[1]{\langle #1 \rangle}
 \newcommand{\true}{\texttt{true}}
 \newcommand{\false}{\texttt{false}}
\algnewcommand\algorithmicinput{\textbf{Input:}}
\algnewcommand\INPUT{\item[\algorithmicinput]}
\algnewcommand\algorithmicoutput{\textbf{Output:}}
\algnewcommand\OUTPUT{\item[\algorithmicoutput]}
\renewcommand{\P}{\mathbb{P}}
\def\E{{\mathbb E}}
\newcommand{\one}{\mathbb{I}}
\def\wh{\widehat}
\renewcommand{\eqref}[1]{Eq.~(\ref{eq:#1})}
\newcommand{\hx}{\wh{x}}
\newcommand{\unif}{\mathrm{Unif}}
\newtheorem{thm}{Theorem}
\newtheorem{definition}{Definition}
\newtheorem{lemma}[thm]{Lemma}
\newtheorem{cor}[thm]{Corollary}
\newcommand{\thmref}[1]{Theorem~\ref{thm:#1}}
\newcommand{\lemref}[1]{Lemma~\ref{lem:#1}}
\newcommand{\secref}[1]{Sec.~\ref{sec:#1}}
\newcommand{\appref}[1]{App.~\ref{app:#1}}
\newcommand{\myalgref}[1]{Alg.~\ref{alg:#1}}
\newcommand{\lab}{\texttt{label}}
\newcommand{\purealg}{\texttt{DFF18}}
\newcommand{\algknown}{\texttt{SR-DFF}}
\newcommand{\algunknown}{\texttt{PFR-DFF}}
\newcommand{\UB}{\mathrm{UB}}
\newcommand{\comment}[1]{$\triangle$ \emph{#1}}
\algnewcommand\algorithmicswitch{\textbf{switch}}
\algnewcommand\algorithmiccase{\textbf{case}}
\algnewcommand\algorithmicassert{\texttt{assert}}
\algnewcommand\Assert[1]{\State \algorithmicassert(#1)}
\newcommand{\papertitle}{Improved Robust Algorithms for Learning with Discriminative Feature Feedback
}
\begin{document}

\title{\papertitle}

\author{Sivan Sabato}
\date{\small Department of Computer Science,\\
  Ben-Gurion University of the Negev, Israel}

\maketitle

\begin{abstract}
  \emph{Discriminative Feature Feedback} is a setting first introduced by \cite{DasguptaDeRoSa18}, which provides a protocol for interactive learning based on feature explanations that are provided by a human teacher. The features  distinguish between the labels of pairs of possibly similar instances. That work has shown that learning in this model can have considerable statistical and computational advantages over learning in standard label-based interactive learning models.

  In this work, we provide new robust interactive learning algorithms for the Discriminative Feature Feedback model, with mistake bounds that are significantly lower than those of previous robust algorithms for this setting. In the adversarial setting, we reduce the dependence on the number of protocol exceptions from quadratic to linear. In addition, we provide an algorithm for a slightly more restricted model, which obtains an even smaller mistake bound for large models with many exceptions.
  In the stochastic setting, we provide the first algorithm that converges to the exception rate with a polynomial sample complexity. Our algorithm and analysis for the stochastic setting involve a new construction that we call \emph{Feature Influence}, which may be of wider applicability.
\end{abstract}

\section{Introduction}\label{sec:intro}

Interactive Machine Learning is an umbrella term for settings in which the learning algorithm interactively obtains feedback from the environment while learning. Such settings, and especially Active Learning, have drawn much interest in recent years, as interactivity can significantly improve the results of learning \cite[see, e.g.,][]{Hanneke14}. Incorporating human feedback beyond labels into the interactive process has a great potential for further improvements, as it can allow learning with less data, and can result in more accurate models that are also more interpretable.

We consider \emph{Discriminative Feature Feedback}, an interactive learning model first introduced by \cite{DasguptaDeRoSa18}. This model provides a formal framework for the idea that human teachers can in many cases provide \emph{explanations} for the reasons why certain examples should be classified in a specific way, based on a comparison between examples. For instance, when classifying patients according to their underlying condition, an expert may explain that patient A does not have the same condition as patient B, even though their parameters may seem similar, because patient A has a blood test result that is inconsistent with the condition of patient B. \cite{DasguptaDeRoSa18} showed that this richer type of feedback can make learning significantly easier than standard label-based approaches. An important property of the Discriminative Feature Feedback model is that the a-priori number of possible features (the dimensionality of the problem) need not be bounded. Indeed, the provided mistake bounds are independent of the number of possible features. 

The model proposed in \cite{DasguptaDeRoSa18} was nonetheless limited, as it required the feedback from the human teacher to always conform to the Discriminative Feature Feedback protocol. Deviations from the protocol can occur because of human error, as well as because of inherent exceptional behavior of certain examples, such as patients who have a rare condition that is difficult to identify using available measures. \cite{DasguptaSa20} showed that handling such deviations effectively requires relaxing the protocol assumptions of the original model, and considered a setting in which a bounded amount (number or rate) of exceptions from the feature feedback protocol is allowed. Under this relaxation, they provided robust algorithms for the adversarial setting and for the stochastic setting with dimension-independent mistake bounds.
However, the mistake bounds derived in \cite{DasguptaSa20} depend very strongly on the exception threshold: In the adversarial setting, the mistake bound is quadratic in the number of exceptions, and in the stochastic setting, whenever the exception rate is non-zero, the mistake rate of the algorithm does not converge to the exception rate.

In this work, we provide new algorithms for the adversarial setting and for the stochastic setting, which obtain significantly improved mistake bounds. For the adversarial setting, we achieve a linear dependence on the number of exceptions, and in the stochastic case we show that it is possible to converge to the exception rate, with a polynomial sample complexity.
We also show that in the adversarial setting, a slightly more restricted feature feedback model leads to a significantly improved mistake bound. Lastly, we show that the algorithms can be made parameter-free while keeping the mistake bound the same up to logarithmic factors.

Our algorithm and analysis for the stochastic setting employ a new construction that we call \emph{Feature Influence}, which may be of wider applicability. This construction allows separating the cost of identifying candidate features from the cost of constructing a low-error classification model over these features, and provides a generic recipe for constructing learning algorithms for a wide range of interactive settings.

\section{Related work}
Several previous works have studied explanation-based interactive learning in application domains. \cite{teso2019explanatory} use an interactive protocol with local machine-generated explanations with user-based corrections, as a way to guide interactive learning. \cite{schramowski2020making} show how deep neural network learning can be improved by using interactive explanations. \cite{GuoDaAlMaCoKn22} studied the effect of explanation-based interactive learning on learning outcomes.

Other works specifically suggest using some form of a feature feedback to improve learning results. \citep{CD90} studied the use of weakly predictive features, and variations of this idea were studied in various applications such as text and vision~\citep{RMJ05,DMM08,S11a,MSCPY18,liang2020alice}. \cite{PD17,VisotskyAtCh19} studied formal versions of these approaches and their impact on the sample complexity. 

Interactive learning with user feedback that provides features distinguishing between instances have been used in computer vision \citep{BWBSWPB10,ZCK15}. The formal Discriminative Feature Feedback setting was first introduced in \cite{DasguptaDeRoSa18}, and the first robust algorithms for this setting were proposed in \cite{DasguptaSa20}. In the next section, we provide a formal definition of the setting and discuss relevant previous results.

\section{Setting and Preliminaries}
\label{sec:setting}

The formal definition of the Discriminative Feature Feedback model provided below follows \cite{DasguptaDeRoSa18} and \cite{DasguptaSa20}.
Let $\cX$ be the domain of examples, and let $\cY$ be a finite domain of labels. Let $c^*:\cX \rightarrow \cY$ be the unknown target concept mapping examples to labels. Let $\Phi$ be a set of binary features, where each feature $\phi \in \Phi$ is some predicate $\phi: \cX \rightarrow \{\true, \false\}$. $\Phi$  can be infinite, and is not necessarily known to the learner. $\Phi$ is assumed to be closed under negation.

The Discriminative Feature Feedback protocol is defined with respect to a \emph{component representation} which is unknown to the learner:  For an integer $m$, let $\cG = \{ G_1,\ldots,G_m\}$ be a set of $m$ subsets of $\cX$ (components) that cover the domain, $\cX = \cup_{i\in [m]}G_i$. For $x \in \cX$, let $G(x)$ be some $G \in \cG$ such that $x \in G$. It is assumed that each $G \in \cG$ has an associated label $\ell(G) \in \cY$, and for each $x \in \cX$, $c^*(x) = \ell(G(x))$. Note that components with the same label may overlap.

The central assumption of the Discriminative Feature Feedback model is that for any two $G_i,G_j$ such that $\ell(G_i) \neq \ell(G_j)$, there exists a \emph{discriminative feature} $\phi(G_i,G_j)$, such that $x$ satisfies $\phi(G_i,G_j)$ if $x \in G_i$, and $x$ does not satisfy $\phi(G_i,G_j)$ if $x \in G_j$. Note that there could be more than one potential discriminative feature between two components. We denote by $\phi(\cdot, \cdot)$  a specific choice of such features that satisfies $\forall i,j, \phi(G_j, G_i) = \neg \phi(G_i, G_j)$.

We first describe the feature feedback interactive learning protocol in the pure, noiseless model. Each iteration of the protocol is of the following form:
\begin{itemize}
\item A new instance $x_t$ arrives.
\item The learning algorithm provides a predicted label $\widehat{y}_t$, and an instance $\widehat{x}_t$ which was previously observed with that label. This instance serves as the explanation for the predicted  label: ``$x_t$ is assigned label $\hat{y}_t$ because $\hat{x}_t$ was assigned label $\hat{y}_t$''.
\item If the prediction is correct, no feedback is obtained from the teacher.
\item If the prediction is incorrect, the teacher provides the correct label $y_t = c^*(x)$, and the discriminative feature $\phi(G(x_t),G(\widehat{x}_t))$. This feature explains why $x_t$ should be labeled differently from $\widehat{x}_t$.
\end{itemize}
For example, if the instances represent animal species and the labels represent the type of each animal, then a discriminative feature $\phi$ between a dolphin and a halibut could be ``has gills'', explaining why a dolphin is not a fish.

\cite{DasguptaSa20} relaxed the above model to allow deviations from the protocol, and studied the adversarial setting and the stochastic setting. In the adversarial setting, the input stream is arbitrary, and it is assumed that there exists a representation of size $m$ such that the interaction with the teacher conforms to the protocol above, except for at most $k$ exceptions. Here, an exception can mean that the teacher provides a label that is inconsistent with the representation or a discriminative feature that is inconsistent with the representation, or both. In the stochastic setting, it is assumed that the input stream is drawn i.i.d.~from some distribution over $\cX$, and that there exists a representation of size $m$ such that the interaction with the teacher conforms to the protocol above except for a rate of at most $\epsilon$ of  exceptions. Note that in both cases, there can be more than one pair of representation size ($m$) and exception threshold ($k$ or $\epsilon$) that are consistent with the input; The bounds discussed below hold for all such pairs. 

We use standard $O(\cdot)$ notation to indicates implicit constant factors, and $\tilde{O}(\cdot)$  to indicate implicit logarithmic factors.\cite{DasguptaDeRoSa18} provided an algorithm (henceforth denoted \purealg) for the adversarial setting with  no exceptions, with a mistake bound of $O(m^2)$. It was shown in \cite{DasguptaSa20} that this is the best possible dependence on $m$. \cite{DasguptaSa20} further showed that describing a representation of size $m$ with $k$ exceptions using a different representation with zero exceptions may require $\Omega(|\Phi|)$ components.  Thus, to obtain mistake bounds that are independent of $|\Phi|$, robust algorithms that can handle exceptions directly are crucial. Recall that $\Phi$ can be infinite, thus bounds that depend on $|\Phi|$ are undesirable.

\cite{DasguptaSa20} provided robust algorithms for the adversarial setting and for the stochastic setting with mistake bounds of $O(m^2k+mk^2)$ and $\tilde{O}( m^2\epsilon + m^2/\sqrt{n})$ respectively, where $n$ is the length of the stochastic input stream. These mistake bounds incur a substantial penalty for each exception. In this work, we derive new algorithms that obtain significantly improved mistake bounds. In particular, in the adversarial setting we prove a mistake bound of $m(m-1)+mk$, giving a linear dependence on the number of exceptions instead of a quadratic dependence, and in the stochastic case we show that it is possible for the mistake rate to converge exactly to the exception rate with a polynomial sample complexity, unlike the previous bound that only converges to $m^2$ times the exception rate.

\section{Algorithms for the adversarial setting}
\label{sec:agnostic-adversarial}

In this section, we consider the adversarial setting. In this setting, the
input stream can be arbitrary, and it is parameterized only by the
representation size $m$, and by the number of exceptions from the
feature-feedback protocol with respect to this representation, which we assume
is upper bounded by an integer $k$.  In \secref{newrobust}, we derive a robust
algorithm for this setting with a mistake bound that avoids the quadratic dependence on $k$ of the algorithm
proposed in \cite{DasguptaSa20}.  The mistake bound is proved in
\secref{mistakebound}. In \secref{simpler}, we study a more restricted
representation model that assumes a unique label for each component, and show
that under this model it is possible to obtain a mistake bound of
$2m(m-1) + 6k$. This is a substantially lower dependence on the number of exceptions for large models. This result pinpoints the challenge of
improving the $mk$ term in the unrestricted representation model to a
specific property of this model: the sharing of labels
between components.

\subsection{An improved robust algorithm for the adversarial setting}
\label{sec:newrobust}
We present a new robust algorithm for the adversarial setting with an improved mistake bound. Surprisingly, unlike the algorithm proposed in \cite{DasguptaSa20}, this new algorithm is only slightly different from the non-robust \purealg.
\algknown\ (Simple Robust DFF) is listed in \myalgref{robust}. 
Like \purealg, \algknown\ keeps track of the following information:
  The first labeled example $(x_0,y_0)$; A decision list $L$, which is represented as a list of prediction rules; Each rule in $L$ has  a conjunction $C[x]$ of features from $\Phi$, where $x$ is some example from $\cX$ that represents the rule, and a label associated with the rule, denoted $\lab[x]$.
The conjunctions in the list are iteratively refined based on the feedback from the teacher. A rule is {\it created} if an example that matches none of the existing conjunctions appears. A rule is {\it refined} if mistakes occur on an example that matched the rule. A rule is {\it deleted} if it becomes too long.  As we show below, a rule can become too long if it was refined using feedback based on exceptions, or if the rule itself was created based on an exception. The option to delete a rule is the main change compared to \purealg.

  We derive the following mistake bound for \algknown.
\begin{thm}\label{thm:mistakebound}
Suppose that the input stream is consistent with some representation of size $m$ with at most $k$ exceptions. Then \algknown\ makes at most $m(m-1) + mk$ mistakes. 
\end{thm}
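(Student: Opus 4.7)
The plan is a charging argument on feature additions. Every mistake in $\algknown$ adds exactly one feature to some rule's conjunction: either the initial discriminating feature when a new rule is created, or a new conjunct when an existing rule is refined. Hence the total mistake count satisfies $M = \sum_r f_r$, summed over all rules ever created during the run, where $f_r$ is the number of features in $r$ at the time it is deleted (or at the end of the run if $r$ survives).

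Following the structural property underlying the analysis of $\purealg$, I would establish that under clean teacher feedback every feature added to a rule $r$ representing a component $G_i$ corresponds to excluding a distinct component $G_j \neq G_i$: the creation feature excludes the component of the default example $x_0$, and each refinement feature excludes a distinct new component, with the previously excluded components remaining excluded so their feature is never requested again. A rule that only receives clean feedback therefore accumulates at most $m - 1$ features and cannot reach the deletion threshold $m$. Consequently every deleted rule must contain at least one exception-introduced feature, and every rule alive at the end carries at most $m - 1$ features. Splitting the sum by these two cases gives
\[
M \;\le\; D \cdot m \;+\; A \cdot (m - 1),
\]
where $D$ is the number of rules ever deleted and $A$ is the number of rules alive at the end.

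The remainder of the argument charges exceptions against the surplus over $m(m - 1)$, by showing $D + \max(A - m,\,0) \le k$. My plan is to assign each of the $k$ exceptions to at most one of: (i) a deletion it contributed to, by choosing any exception-introduced feature in the deleted rule; or (ii) an ``excess'' alive rule at the end, defined as an alive rule whose component $G_i$ already carries another alive rule. Case (ii) is justified because an excess rule for $G_i$ can only be created when some $G_i$-example fails to match every alive rule; yet a rule for $G_i$ built purely from clean feedback contains all of $G_i$ in its match set, so the earlier rule for $G_i$ must have been corrupted by some exception, which is the one charged. Since deletion removes its rule from play, an exception charged to a deletion cannot simultaneously trigger an excess from the same rule. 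Chaining,
\[
D \cdot m + A \cdot (m - 1) \;=\; m(m-1) + mD + (A - m)(m - 1) \;\le\; m(m-1) + m \bigl(D + \max(A - m,\,0)\bigr) \;\le\; m(m-1) + mk.
\]

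The main technical obstacle is making this charging precise. Exceptions come in several forms---wrong label, wrong feature, or both---and their downstream effects can compound: multiple exceptions may feed into the same rule before it is deleted or replaced, and a single wrong feature may simultaneously push a rule to the deletion threshold while also mis-excluding future genuine examples. Carefully formulating the invariant so that each exception is charged to at most one of a deletion or an excess creation, with no double-counting across rules or across successive corruptions of the same component, will be the heart of the proof.
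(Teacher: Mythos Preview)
Your high-level decomposition is essentially the paper's: the target inequality $D + \max(A-m,0)\le k$ is equivalent to the pair of facts the paper proves, namely that at most $m+k$ rules are ever created (\lemref{rgen}) and at most $k$ are ever deleted (\lemref{delete}). But two concrete errors prevent your argument from going through as written.

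\textbf{You have misread the algorithm.} In \algknown, when a mistake occurs in the ``no rule matches'' branch, the new rule $C[x_t]$ is added as an \emph{empty} conjunction; the feature $\phi$ returned by the teacher in that round is discarded. Hence your identity $M=\sum_r f_r$ is false: each rule creation is a mistake that contributes no feature. The correct accounting is $M = R + \sum_r f_r$ with $R=D+A$ the total number of rules created, so your starting inequality should be $M \le D(m+1)+Am$ rather than $M\le Dm+A(m-1)$.

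\textbf{The charging for deletions has a gap.} Your claim that ``every deleted rule must contain at least one exception-introduced feature'' is not true. A rule $C[\hat x]$ whose representative $\hat x$ was itself an exception can accumulate $m$ conjuncts all supplied in valid refinement rounds and then be deleted, with none of its features exception-introduced. (Your intended escape---that the ``creation feature'' is then the exception-introduced one---fails precisely because there is no creation feature.) The paper's \lemref{delete} handles this with two disjuncts: a deleted rule is either \emph{corrupted} (some refinement was on an exception) or was \emph{created by} an exception. Your charging scheme must incorporate this second case, and once you do, you also need to revisit the ``excess alive rule'' charging, since an exception-created rule may later become the ``earlier rule for $G_i$'' in your case (ii) without ever containing all of $G_i$; the paper sidesteps this by proving \lemref{dup} only for rules that are both valid and non-corrupted, and then observing that a single exception cannot both create a rule and corrupt a different one.
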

This mistake bound is a factor of $k$ smaller compared to the mistake bound of $O(m^2k+mk^2)$ of the robust algorithm of \cite{DasguptaSa20}. 
Unlike the non-robust algorithm \purealg, \algknown\ requires the maximal number of components $m$ as input. The robust algorithm of \cite{DasguptaSa20} required both $m$ and $k$ as input, thus this is an improvement for robust algorithms. Moreover, the following theorem shows that the requirement to have  $m$ as input can also be relaxed, while keeping the order of the mistake bound the same up to logarithmic factors.
\begin{thm}\label{thm:unknownbound}
  Suppose that the input stream is consistent with some representation of size $m$ with at most $k$ exceptions. Let $\UB(m,k) := m(m-1) + mk$ be the mistake upper bound proved in \thmref{mistakebound}. There exists a parameter-free robust algorithm with a mistake bound of 
  \[
    32\UB(m,k)\log^2(8\UB(m,k)) = \tilde{O}(m^2+mk).
  \]
\end{thm}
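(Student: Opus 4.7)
The plan is to convert \algknown\ into a parameter-free algorithm via a two-dimensional doubling scheme over the unknown pair $(m,k)$, exploiting that \thmref{mistakebound} is a worst-case guarantee that applies afresh to every (re)start of the base algorithm. I enumerate dyadic guesses $(m',k') = (2^i, 2^j)$ for $i,j \in \{0,1,2,\ldots\}$ in a fixed order of non-decreasing $\UB(m',k') = m'(m'-1) + m'k'$. For each guess in turn, I launch a fresh instance of \algknown\ with parameter $m'$, and I abandon it and move to the next guess as soon as it has made $\UB(m',k')$ mistakes in its own phase.

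For correctness, given any $(m,k)$ consistent with the stream, let $(m^*,k^*)$ be the earliest dyadic pair in the enumeration with $m^* \ge m$ and $k^* \ge k$; by minimality $m^* \le \max(1,2m)$ and $k^* \le \max(1,2k)$, so a direct calculation gives $\UB(m^*,k^*) = O(\UB(m,k))$. The suffix of the stream at the moment the good instance starts is still consistent with the original representation of size $m$ with at most $k$ exceptions (the exception count is a global upper bound and can only decrease on suffixes), so \thmref{mistakebound} ensures this instance makes at most $\UB(m^*,k) \le \UB(m^*,k^*)$ mistakes in total. Therefore the algorithm never advances past this ``good'' guess.

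To bound the total mistake count, I sum the at most $\UB(m'_t,k'_t)$ mistakes incurred in each phase up to and including the good one. Every such term is at most $\UB(m^*,k^*) = O(\UB(m,k))$, and the number of dyadic pairs with $\UB(m',k') \le B$ is $O(\log^2 B)$, because $2^i(2^i-1) \le B$ forces $i = O(\log B)$ and for each $i$ the constraint $2^{i+j} \le B$ leaves $O(\log B)$ choices for $j$. Multiplying gives a total of $O(\UB(m,k)\log^2 \UB(m,k))$ mistakes. To recover the explicit constants $32$ and $8$ in the theorem statement, I would track the exact ratio $\UB(m^*,k^*)/\UB(m,k) \le 8$ obtained from the boundary cases, and the precise count of dyadic pairs inside the geometric sum.

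The main obstacle is to justify that restarting \algknown\ repeatedly never costs more than one extra $\UB(m',k')$ worth of mistakes per phase. The clean resolution is that the adversarial consistency assumption is a property of the entire stream, so it is inherited by every suffix, and the mistake bound of \thmref{mistakebound} may be re-invoked verbatim on the stream segment processed by each fresh invocation. The only remaining care is in the boundary cases $k=0$ or $m=1$, which are handled by starting the dyadic grid at $m' = k' = 1$ and folding the $O(1)$ wasted mistakes into the logarithmic factor.
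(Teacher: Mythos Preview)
Your proposal is correct and takes essentially the same approach as the paper: a two-dimensional doubling trick over dyadic guesses $(m',k')$, running a fresh copy of \algknown\ with parameter $m'$ and a mistake budget $\UB(m',k')$ in each phase, and arguing that once a phase with $m'\ge m$, $k'\ge k$ is reached the bound of \thmref{mistakebound} (which applies to any suffix of the stream) prevents further abandonment. The only cosmetic difference is that the paper organizes the enumeration via an explicit outer ``shell'' parameter $v$ that doubles, with inner loops over $\tilde m$ and $\tilde k$ restricted to $v/2<\UB(\tilde m,\tilde k)\le v$, whereas you sort the dyadic pairs directly by $\UB$; both yield $O(\log^2\UB)$ phases of cost $O(\UB)$ each and the same $\tilde O(m^2+mk)$ bound.
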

This theorem is proved in \appref{parameterfree}. We now turn to prove the main mistake bound, \thmref{mistakebound}.

\begin{algorithm}[t]
  \caption{\algknown: A simple robust Discriminative Feature Feedback algorithm for the adversarial setting}
  \label{alg:robust}
  \begin{algorithmic}[1]
    \INPUT{Maximal number of components $m$.}
    \State $t \leftarrow 0$
    \State Get the label $y_0$ of the first example $x_0$.
    \State Initialize $L$ to an empty list.
    
    \While{\true}
    \State $t \leftarrow t+1$
    \State Get next example $x_t$.
      \If{$\exists C[\wh{x}] \in L$ such that $x_t$ satisfies $C[\wh{x}]$}
         \State Predict $\lab[\wh{x}]$ and provide explanation $\wh{x}$.
         \If{prediction is incorrect}
         \State Get correct label $y_t$ and feature $\phi$.
         \State $C[\wh{x}] := C[\wh{x}] \wedge \neg \phi$ \label{line:refine}
         \If{$|C[\wh{x}]| \geq m$}
             \State Delete $C[\wh{x}]$ from $L$.
             \EndIf
         \EndIf
      \Else ~ (no relevant rule exists)
         \State Predict $y_0$ and provide explanation $x_0$.
         \If{prediction is incorrect}
            \State Get correct label $y_t$ and feature $\phi$. 
            \State Add to $L$ an empty conjunction $C[x_t]$, and set $\lab[x_t] \leftarrow y_t$.
         \EndIf
       \EndIf
       \EndWhile
     \end{algorithmic}
     
   \end{algorithm}

\subsection{Bounding the mistakes of \algknown}\label{sec:analysis}\label{sec:mistakebound}

We now prove \thmref{mistakebound} that bounds the number of mistakes made by \algknown. Assume some representation $\cG$ of size $m$ such that the input has at most $k$ exceptions with respect to this representation. 
We call a rule maintained by the algorithm \emph{corrupted} if at least one of the features in its conjunction was added in line \ref{line:refine} when the example $x_t$ was an exception. We say that a rule $C[x]$ was created by an exception if $x$ is an exception.  Note that a rule can be created by an exception and still not be considered corrupted.
Call non-exception examples \emph{valid examples}. Call rules created by valid examples \emph{valid rules}.

We start by proving several invariants of the algorithm. First, we prove that valid non-corrupted rules are always satisfied by a whole component in the representation $\cG$. 

\begin{lemma}\label{lem:gx}
  At all times in the algorithm, if $C[\hx]$ is a valid and non-corrupted rule, then:
  \begin{itemize}
  \item Conjunction $C[\hx]$ is satisfied by every example in $G(\hx)$.
  \item For every feature $\phi$ in $C[\hx]$, there is some valid example $x \in \cX$ such that $\phi(G(\hx),G(x)) = \phi$.
  \end{itemize}
\end{lemma}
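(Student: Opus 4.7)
I would prove this by induction on the sequence of operations that modify $C[\hx]$ (creation and refinements), showing that both invariants are maintained whenever $C[\hx]$ is present, valid, and non-corrupted. The base case is trivial: when the rule is created in the ``no relevant rule'' branch, the conjunction $C[\hx]$ is empty, so it is vacuously satisfied by every example in $G(\hx)$, and the claim about features is vacuous.

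For the inductive step, I would consider a refinement of $C[\hx]$ in line \ref{line:refine} caused by an example $x_t$ that matched $C[\hx]$ and led to a mistake. Since we are in the case where the rule remains non-corrupted after the refinement, the triggering example $x_t$ must be a valid (non-exception) example—otherwise the refinement would by definition corrupt the rule. Because $\hx$ is valid, the label $\lab[\hx]$ was originally set from a non-exception teacher response, so $\lab[\hx] = c^*(\hx) = \ell(G(\hx))$; and because $x_t$ is valid while the prediction $\lab[\hx]$ was wrong, we have $c^*(x_t) = \ell(G(x_t)) \neq \ell(G(\hx))$, ensuring that a genuine discriminative feature exists between $G(x_t)$ and $G(\hx)$.

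Since $x_t$ is a non-exception, the feature $\phi$ returned by the teacher is the true discriminative feature $\phi(G(x_t), G(\hx))$, which is satisfied by every example in $G(x_t)$ and not satisfied by any example in $G(\hx)$. Therefore every $z \in G(\hx)$ satisfies $\neg \phi$, so by the inductive hypothesis it still satisfies the refined conjunction $C[\hx] \wedge \neg \phi$; this establishes the first invariant. For the second invariant, the new feature appended to $C[\hx]$ is $\neg \phi = \phi(G(\hx), G(x_t))$ by the convention $\phi(G_j, G_i) = \neg \phi(G_i, G_j)$, and $x_t$ is a valid example, which is exactly the statement required.

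The main subtlety I anticipate is simply pinning down what ``non-corrupted'' buys us at the refinement step: one must argue that if $x_t$ were an exception, the rule would become corrupted, and hence in the scenario covered by the lemma's hypothesis we can freely use that the teacher's label and feature are the honest ones from the representation $\cG$. Once this is observed, both invariants follow directly from the definition of a discriminative feature and the antisymmetry convention on $\phi(\cdot,\cdot)$. Deletions of rules (when $|C[\hx]| \geq m$) do not affect the argument, since they only remove rules from $L$ rather than modifying any surviving rule.
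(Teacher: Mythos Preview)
Your proposal is correct and follows essentially the same approach as the paper: induction on the refinements of $C[\hx]$, with the key observation that non-corruption forces the triggering example $x_t$ to be valid, so the returned feature is the genuine discriminative feature $\phi(G(x_t),G(\hx))$ and $\neg\phi = \phi(G(\hx),G(x_t))$. Your write-up is in fact slightly more detailed than the paper's (you explicitly verify that $\ell(G(x_t)) \neq \ell(G(\hx))$ so that a discriminative feature exists), but the argument is the same.
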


\begin{proof}
  We prove the claim by induction on the length of $C[\hx]$.
  When $C[\hx]$ is first created, it is an empty conjunction so it is satisfied by all of $G(\hx)$.  When $C[\hx]$ is refined by $\neg \phi$, since the rule is not corrupted, the example $x$ for which $\phi$ was provided is valid, hence $\neg\phi = \phi(G(\hx), G(x))$. This implies that $G(\hx)$ has no examples that are satisfied by $\phi$. Hence, after adding $\neg \phi$ to $C[\hx]$, the refined $C[\hx]$ is still satisfied by $G(\hx)$ and is separated by $\phi$ from $G(x)$.
 \end{proof}

 Next, we prove that two non-corrupted valid rules have representatives from different components.
 \begin{lemma}\label{lem:dup}
   For any two valid non-corrupted rules $C[x]$ and $C[x']$ that exist at the same time in $L$, $G(x) \neq G(x')$. 
 \end{lemma}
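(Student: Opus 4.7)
The plan is to proceed by contradiction: assume two valid non-corrupted rules $C[\hx]$ and $C[\hx']$ coexist in $L$ at some time $t$, yet $G(\hx) = G(\hx')$. Since the rules are distinct, they were created at distinct times; let $t_0 \le t$ be the later of the two creation times, and without loss of generality suppose $C[\hx']$ is the one created at $t_0$. Because both rules are present at time $t$ and rules can only disappear from $L$ via deletion, the earlier-created rule $C[\hx]$ must have been present in $L$ throughout the interval $[t_0, t]$, in particular at time $t_0$.

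The crux of the argument is to combine two facts at time $t_0$. First, by the structure of \algknown, a new rule is created only when the incoming example satisfies none of the conjunctions already in $L$; hence $\hx'$ does not satisfy $C[\hx]$ as it stands at time $t_0$. Second, I would apply Lemma~\ref{lem:gx} to the rule $C[\hx]$ at time $t_0$. To do so, I need $C[\hx]$ to be valid and non-corrupted at time $t_0$. Validity is intrinsic to the rule (it depends on whether $\hx$ was an exception when $C[\hx]$ was created), so it holds at every time the rule is in $L$. Non-corruption at time $t_0$ follows from non-corruption at time $t$: the conjunction at time $t_0$ is a sub-conjunction of the one at time $t$ (features are only added, never removed), so any corrupting feature present at $t_0$ would still be present at $t$.

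With Lemma~\ref{lem:gx} in hand, every example of $G(\hx)$ satisfies $C[\hx]$ at time $t_0$. Since $\hx' \in G(\hx') = G(\hx)$, this forces $\hx'$ to satisfy $C[\hx]$ at $t_0$, contradicting the creation-time property noted above. Hence no such pair $\hx,\hx'$ can exist, proving the lemma.

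The step I expect to require the most care is the monotonicity bookkeeping around ``non-corrupted'': making sure the non-corruption hypothesis transfers cleanly from time $t$ back to time $t_0$, and making sure $C[\hx]$ really is in $L$ at $t_0$. Both reduce to the simple observation that a rule's conjunction grows monotonically in length during its lifetime in $L$, so there are no subtleties beyond stating this carefully.
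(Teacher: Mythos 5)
Your proof is correct and follows essentially the same route as the paper's: the rule created later was created only because the incoming example satisfied no existing conjunction, while Lemma~\ref{lem:gx} guarantees the earlier valid non-corrupted rule is satisfied by its whole component, so the two representatives cannot share a component. Your extra bookkeeping (the earlier rule being present at the later creation time, and non-corruption transferring backwards because conjunctions only grow in \algknown) just makes explicit what the paper's shorter argument leaves implicit.
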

 \begin{proof}
   Suppose $x$ was observed earlier in the input sequence and $x'$ was observed later; Since $C[x]$ and $C[x']$ were both created, this means that $C[x]$, in its form when $x'$ was observed, was not satisfied by $x'$. But by \lemref{gx}, $C[x]$ is always satisfied by $G(x)$. Hence, $x' \notin G(x)$, which implies the claim.
 \end{proof}

 Next, we prove that only rules affected by exceptions might be deleted.
 \begin{lemma}\label{lem:delete}
   If the rule $C[\hx]$ is deleted then at least one of the two holds: $\hx$ is an exception or $C[\hx]$ is corrupted. 
 \end{lemma}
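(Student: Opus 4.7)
The plan is to derive a contradiction by showing that if $\hat{x}$ is valid and $C[\hat{x}]$ is not corrupted, then $C[\hat{x}]$ can never grow to length $m$, so the deletion condition $|C[\hat{x}]|\ge m$ is never triggered. Since a rule is only deleted when its conjunction reaches length $m$, this will establish the lemma.

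First I would assume, for contradiction, that $\hat{x}$ is valid and $C[\hat{x}]$ is not corrupted at the moment of deletion. Because $\hat{x}$ is valid, when $C[\hat{x}]$ was created the teacher's label was $y_t = c^*(\hat{x}) = \ell(G(\hat{x}))$, so $\lab[\hat{x}] = \ell(G(\hat{x}))$. List the literals of $C[\hat{x}]$ in the order they were added as $\psi_1,\ldots,\psi_r$, with refining example $x_l$ associated to $\psi_l$. Because $C[\hat{x}]$ is non-corrupted, each $x_l$ is valid, so by \lemref{gx}, $\psi_l = \phi(G(\hat{x}), G(x_l))$, i.e., $\psi_l$ is \true\ on $G(\hat{x})$ and \false\ on $G(x_l)$.

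Next I would show the $\psi_l$ are pairwise distinct and that their associated components $G(x_l)$ are also pairwise distinct and all different from $G(\hat{x})$. Distinctness of literals: at the moment $\psi_j$ is added, the example $x_j$ satisfies the current $C[\hat{x}]$, hence $x_j$ satisfies every earlier literal $\psi_i$ ($i<j$); but since $\psi_j = \phi(G(\hat{x}), G(x_j))$ is false on $G(x_j) \ni x_j$, we get $\psi_j \neq \psi_i$. Since $\psi_l$ depends on $x_l$ only through $G(x_l)$ (as $\phi(\cdot,\cdot)$ is a fixed choice), distinct $\psi_l$ force distinct $G(x_l)$. Finally, each refining $x_l$ produced a mistake on the prediction $\lab[\hat{x}] = \ell(G(\hat{x}))$, so $c^*(x_l) = \ell(G(x_l)) \neq \ell(G(\hat{x}))$, hence $G(x_l) \neq G(\hat{x})$.

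Combining these, the $r$ components $G(x_1),\ldots,G(x_r)$ are distinct elements of $\cG \setminus \{G(\hat{x})\}$, which has $m-1$ members, so $|C[\hat{x}]| = r \le m-1 < m$. This contradicts the fact that the deletion was triggered. The main obstacle is the distinctness step, which requires combining two slightly different facts: the semantic property from \lemref{gx} (each literal is a fixed discriminative feature keyed to the refining example's component) and the dynamic property of the algorithm (a refining example must satisfy every literal already present, forcing the newly added literal to be new). Once those are in place, the counting argument is immediate.
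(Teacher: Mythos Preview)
Your proof is correct and follows essentially the same approach as the paper: assume validity and non-corruption, invoke \lemref{gx} to identify each literal with a discriminative feature against some component, and count components to cap the conjunction length at $m-1$. Your version is simply more explicit---you spell out why the refining components are pairwise distinct (via the ``$x_j$ satisfies all earlier literals but not $\psi_j$'' argument) and why none of them can equal $G(\hat{x})$ (via the label mismatch), whereas the paper compresses this into the single sentence ``each component can contribute at most one feature.''
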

 \begin{proof} 
   Assume for contradiction that $\hx$ is not an exception and $C[\hx]$ is not corrupted, but rule $C[\hx]$ is deleted. Thus, the conjunction $C[\hx]$ has at least $m$ features.  By \lemref{gx}, for each feature in $C[\hx]$ there is some valid example $x$ such that $G(x)$ is separated from $G(\hx)$ using that feature. Moreover, after $C[\hx]$ is refined with this feature, no additional examples from $G(x)$ will be satisfied by this rule. Thus, each component can contribute at most one feature to $C[\hx]$. Since there are $m$ components in $\cG$, there are at most $m-1$ features in $C[\hx]$, a contradiction to the size of $C[\hx]$. 
 \end{proof}

We bound the total number of mistakes by first bounding the total number of rules created by the algorithm.
\begin{lemma}
 \algknown\ creates at most $m+k$ rules. \label{lem:rgen}
\end{lemma}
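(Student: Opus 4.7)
The plan is to partition rule creations into \emph{valid creations} (those whose representative $x_t$ is not an exception) and \emph{exception creations} (those whose representative is an exception), counted as $n_v$ and $n_e$ respectively. Each exception example can play at most one role during its arrival---either serve as the representative of a newly created rule (contributing to $n_e$) or serve as feedback refining an existing rule (contributing to the count $k_b$ of exceptions used at line~\ref{line:refine})---so $n_e + k_b \le k$. It therefore suffices to prove $n_v \le m + k_b$.

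For each component $G \in \cG$, let $r_G$ denote the number of valid creations whose representative lies in $G$, enumerated in temporal order as $x_1^G, \ldots, x_{r_G}^G$. The key step I would prove is: for each $j \in \{2, \ldots, r_G\}$, at least one exception was used to refine $C[x_{j-1}^G]$ strictly between its creation and the arrival of $x_j^G$. Given this, each distinct $(G,j)$ pair corresponds to a refinement of a distinct rule, and since a single event refines only one rule, these exception-refinement events are all distinct; hence $\sum_G (r_G-1)_+ \le k_b$. Since $n_v = \sum_G r_G \le m + \sum_G (r_G-1)_+$ (using $|\cG| = m$), we obtain $n_v \le m + k_b$ and therefore $n_v + n_e \le m + k$.

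To establish the key step, fix $j \ge 2$ and examine the moment just before $x_j^G$ is processed. Because a new rule is about to be created, $x_j^G$ satisfies no rule currently in $L$. Consider the fate of $C[x_{j-1}^G]$: if it is still in $L$, then $x_j^G \in G = G(x_{j-1}^G)$ fails to satisfy it, so since $x_{j-1}^G$ is valid, the contrapositive of \lemref{gx} forces $C[x_{j-1}^G]$ to be corrupted, which requires at least one exception-refinement after its creation; if $C[x_{j-1}^G]$ has already been deleted, \lemref{delete} together with the validity of $x_{j-1}^G$ forces corruption (and hence an exception-refinement) before deletion. Either way, an exception-refinement event occurred on $C[x_{j-1}^G]$ in the required interval. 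The main subtlety I anticipate is the bookkeeping that ensures the exception-refinements charged across different $(G,j)$ pairs are truly distinct events; this follows from pinning each charge to a specific rule $C[x_{j-1}^G]$ whose representative is unique to the pair $(G,j)$, together with the fact that a single refinement event modifies only one rule.
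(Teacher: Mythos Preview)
Your proof is correct and follows essentially the same approach as the paper's, with a slightly different organization of the charging argument. The paper partitions all rules into (i) valid rules that remain non-corrupted (at most $m$, via \lemref{dup} and \lemref{delete}) and (ii) rules that are exception-created or eventually corrupted (at most $k$, charging one exception to each). You instead partition valid creations by the component $G(x)$ of their representative, count the first creation in each component for free ($\le m$), and charge each subsequent valid creation in a component to an exception that corrupted its predecessor's rule. Both arguments rest on the same observations---an exception either creates or refines but not both, and \lemref{gx}/\lemref{delete} force a valid rule to be corrupted before another valid rule from the same component can be created---so the difference is bookkeeping rather than substance; you effectively re-derive the content of \lemref{dup} inline rather than invoking it.

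One minor point of phrasing: when you write ``valid creations whose representative lies in $G$,'' you should mean $G(x)=G$ (the designated component) rather than mere membership $x\in G$, since components may overlap; otherwise $n_v=\sum_G r_G$ would overcount. Your subsequent use of ``$x_j^G\in G=G(x_{j-1}^G)$'' makes clear this is your intent.
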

\begin{proof}

  By \lemref{dup}, at any time in the algorithm, the total number of valid non-corrupted rules is at most the number of components, $m$. Moreover, by \lemref{delete}, such rules are never deleted. Therefore, the total number of such rules that are generated by the algorithm is at most $m$. 
  In addition, any one exception cannot both generate a new rule and corrupt an existing one. Therefore, the total number of corrupted rules and rules generated by exceptions over the entire run of the algorithm is at most $k$. Thus, the total number of rules created by the algorithm is at most $m+k$. 
\end{proof}
The proof of \thmref{mistakebound} is now immediate, as follows: 
Any rule that makes $m$ mistakes is of length $m$ and is thus deleted. Therefore, each rule makes at most $m-1$ mistakes, except rules that end up being deleted --- these make  $m$ mistakes. By \lemref{rgen}, at most $m+k$  rules are created by the algorithm, and by \lemref{delete}, at most $k$ of them are deleted. Therefore, the total number of mistakes made by the algorithm is at most $m(m-1) + mk$. This completes the proof of \thmref{mistakebound}.

\subsection{An improved mistake bound for a restricted model}
\label{sec:simpler}
According to the mistake bound in \thmref{mistakebound}, every additional exception can result in as many as $m$ additional mistakes. Is there an algorithm that pays only a constant number of mistakes for every additional exception in the adversarial setting? We now show that such an algorithm does exist for a more restricted representation model. It remains open whether a similar result can be obtained for the general model.

In the restricted representation model that we now consider, each component has a unique label. We prove the following mistake bound.

\begin{thm}\label{thm:mistakeboundsimple}
  Suppose that the input stream is consistent with some representation of size $m$ with at most $k$ exceptions. Suppose further that each component has a unique label. Then there exists a (randomized) algorithm with an expected number of mistakes of at most $2m(m-1) + 6k$. 
\end{thm}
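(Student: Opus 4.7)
My plan is to exploit the unique-label property of the restricted model: when each component of $\cG$ carries a distinct label, the correct label $y_t$ revealed by the teacher already pins down $G(x_t)$ uniquely, provided $x_t$ is not an exception. This gives a second component-identification signal, beyond conjunction matching, which can be used to detect corrupted rules with only a constant overhead per exception instead of the $O(m)$ overhead incurred by \algknown.

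The algorithm I would design starts from \algknown\ with two modifications. First, maintain a label-to-rule index so that at any time there is at most one active rule per observed label. Second, when a mistake occurs on $x_t$ (predicted label $y$ by a rule $C[\hx]$, correct label $y_t$, discriminative feature $\phi$), check whether the label index already contains a rule $C'[\hx']$ with label $y_t$. If it does, then in the absence of exceptions both $\hx'$ and $x_t$ would lie in the unique component of label $y_t$, so $C'[\hx']$ ought to have been satisfied by $x_t$; the fact that a different rule matched $x_t$ is strong evidence that at least one of $C[\hx]$ and $C'[\hx']$ has been contaminated by an exception. At this moment I randomize: with probability $\half$ delete $C[\hx]$, and with probability $\half$ delete $C'[\hx']$. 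I also retain the length-$m$ deletion safety net from \algknown.

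The mistake bound would be split along the lines of the proof of \thmref{mistakebound}. For the $2m(m-1)$ piece, \lemref{gx} and \lemref{dup} still apply to valid non-corrupted rules, and each such rule contributes at most $m-1$ mistakes; the factor of $2$ would arise because a valid rule may now also be erroneously erased by an unlucky coin flip at a label-conflict event and then has to be rebuilt. I would argue that each valid component triggers only $O(1)$ conflicts in expectation, so the total inflation over the non-robust bound is at most a factor of two. For the $6k$ piece, I would set up a potential-function charging argument: each exception increases potential by a fixed amount, each mistake on a corrupted or conflict-generating rule releases a constant share of potential, and the random deletion guarantees that the corrupted object introduced by an exception is discarded within an expected constant number of further mistakes.

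The main obstacle I anticipate is controlling the \emph{interaction} between exceptions. A single exception can leave a corrupted rule in play, and a later exception can then create a conflict between that corrupted rule and a valid rule, potentially cascading into the deletion of valid rules and inflating the bound. To handle this, the potential function must distinguish rules by their suspicion and corruption status, and the analysis must show that cascading effects are absorbed by the exception budget rather than charged to the valid rules. Once that structural statement is in hand, tuning the deletion probability and the potential coefficients to land on the precise constants $2$ and $6$ should reduce to routine bookkeeping; the crux is establishing that such constants exist at all, independently of $m$.
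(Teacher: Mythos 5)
There is a genuine gap, and it is in your very first step: the event you use as an exception detector is not evidence of an exception. Take $k=0$ and components $G_a,G_b,G_c$ with distinct labels. Suppose the rule $C[\hx_a]$ for $G_a$ is created first and refined once with the true literal $\phi(G_a,G_c)$; that feature is unconstrained on $G_b$, so a $G_b$-example that falsifies it matches nothing and creates a rule with label $\ell(G_b)$, while a later $G_b$-example that satisfies it matches $C[\hx_a]$ and is mislabeled. At that moment a rule carrying the true label already exists, yet both rules are valid and non-corrupted; in \algknown\ (and \purealg) this is exactly the ordinary refinement step by which a rule accrues its up to $m-1$ literals, and it can occur $\Theta(m^2)$ times in an exception-free run. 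Your coin flip therefore deletes a valid rule with probability $1/2$ at such events, discarding all literals accumulated in it; since the adversary controls the order, it can re-stage the same configuration after every rebuild, so no potential decreases, the claim that each valid component sees only $O(1)$ conflicts in expectation is unsupported, and even the $2m(m-1)$ term (the $k=0$ case) is not established. The $6k$ term fails for the symmetric reason: a rule created or corrupted by a single exception survives each conflict with probability $1/2$, and each time it survives it instead kills a valid rule whose reconstruction costs up to $m-1$ further mistakes, so the natural accounting charges $\Theta(m)$ expected mistakes to one exception --- precisely the $mk$ behavior the theorem is meant to beat.

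The paper's algorithm (\myalgref{robustsimple}) uses uniqueness in a different branch and with a much gentler operation. When \emph{no} rule is satisfied by $x_t$ but a rule $C[\hx]$ with label $y_t$ already exists, uniqueness implies (for valid $x_t$ and $\hx$) that $x_t \in G(\hx)$, so any literal of $C[\hx]$ violated by $x_t$ is provably not a discriminative feature of $G(\hx)$, and a single such literal is removed; a possibly valid rule is never deleted outright at a conflict. Wholesale deletion happens only after $m+l-1$ counted updates, and \lemref{updates} shows a valid rule can reach that count only if at least $l/2$ of its updates were caused by exceptions, so deletions of valid rules are paid for by the exception budget rather than by coin flips. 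Finally, new rules are created only with probability $p$, balancing the expected $1/p$ mistakes spent per valid rule creation against the $m-1$ refinements each valid rule needs; choosing $l=1/p$ and $p=1/(m-1)$ yields $2m(m-1)+6k$. What your sketch lacks is precisely a mechanism of this kind, guaranteeing that a valid rule can be destroyed only after $\Omega(l)$ exception-caused damage while each exception inflicts $O(1)$ damage; without such a structural guarantee, tuning deletion probabilities and potential coefficients cannot produce constants independent of $m$.
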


The uniqueness of the label is used by the algorithm to identify which rule needs correction in the case of a mistake. This pinpoints the challenge for the general representation model, and may help in the future to obtain a similar result for that model, or prove its impossibility.
The algorithm is listed in \myalgref{robustsimple}. It accepts parameters $p \in [0,1]$ and $l \in \nats$. \thmref{mistakeboundsimple} is proved using $l = m-1$ and $p = 1/(m-1)$.

Similarly to \algknown, \myalgref{robustsimple} refines rules based on feedback on incorrect labels. In addition to the list of rules, \myalgref{robustsimple} also maintains a counter $U$ for each rule, of the number of updates that have been applied to the rule.\footnote{For simplicity, we include in the count of updates also cases where an updated was avoided due to inconsistency (line \ref{line:validphi} in  \myalgref{robustsimple}) and term these also ``updates'' below.} When $U$ becomes too large, the rule is deleted. An important difference from \algknown\ is in the behavior of the algorithm when no rule is consistent with the example. In this case, when getting the true label of the example, if no rule with this label already exists, the algorithm creates one with an independent  probability of $p$. If such a rule does exist, then one of the features in its conjunction that disagree with the current example is removed; Note that such a feature must exist, otherwise the rule would be consistent with the example.

\begin{algorithm}[h]
  \caption{Robust Discriminative Feature Feedback for the adversarial setting under the unique-label assumption}
  \label{alg:robustsimple}
  \begin{algorithmic}[1]

\INPUT{$p \in [0,1]$, $l \in \nats$}
\State $t \leftarrow 0$
\State Get the label $y_0$ of the first example $x_0$.
\State Initialize $L$ to an empty list.
\While{\true}
\State $t \leftarrow t+1$
    \State Get next example $x_t$.
\If{$\exists C[\wh{x}] \in L$ such that $x_t$ satisfies $C[\wh{x}]$}
\State Predict $\lab[\wh{x}]$ and provide example $\wh{x}$.
\If{prediction is incorrect}
 \State Get correct label $y_t$ and feature $\phi_t$.
\If{$\hat{x}$ satisfies $\neg\phi_t$ and $x_t$ satisfies $\phi_t$}\label{line:validphi}
\State Add $\neg \phi_t$ to the conjunction $C[\hat{x}]$.\label{line:added} 
\EndIf

\State $U(\hx) \leftarrow U(\hx) + 1$.
 \State \textbf{if} $U(\hx) \geq m+l-1$ \textbf{then} delete rule $C[\hx]$. 
\EndIf
         \Else ~~~\comment{No rule is satisfied by $x_t$}
\State Predict $y_0$ and provide example $x_0$.
\If{prediction is incorrect}
 \State Get correct label $y_t$ and feature $\phi_t$.
              \If{there is no rule in $L$ with label $y_t$}
                   \State Draw an independent bit $B$ with success probability $p$.
\If{B = 1}
\State Add to $L$ a new empty rule $C[x_t]$ with $\lab[x_t]=y_t$;
\State $U(x_t) \leftarrow 0$.
\EndIf
              \Else 
                   \State Let $C[\hx]$ be the rule in $L$ with label $y_t$.
                   \State Remove from conjunction $C[\hx]$ some feature that is not satisfied by $x_t$.\label{line:delete}

                   \State $U(\hx) \leftarrow U(\hx) + 1$.
                   \State \textbf{if} $U(\hx) \geq m+l-1$ \textbf{then} delete rule $C[\hx]$. 
              \EndIf
         \EndIf
\EndIf
\EndWhile
\end{algorithmic}
\end{algorithm}

The deletion mechanism allows the algorithm to correct corrupted rules using subsequent examples. Several key ideas are incorporated into the algorithm, and lead to a mistake bound with only a constant factor over $k$:
\begin{itemize}
\item A low probability of creating a new rule ensures that exceptions do not cause the creation of too many rules.
\item Removing single features from rules instead of deleting the entire rule limits the effect of exceptions on the list of rules. This is possible due to the unique-label assumption.
\item Rules are deleted only if they are quite long, so that each corrupted rule can absorb several exceptions.
  \end{itemize}
Overall, no single mistake can cause too large a change to the list of rules; This stability property keeps the dependence of the bound on the number of exceptions low.

We now prove \thmref{mistakeboundsimple}. First, note that due to the unique-label assumption, for any valid rule $C[\hx]$ with label $y$, any valid $x_t$ with the same label must be in $G(\hx)$. Therefore, every time a literal is deleted from a rule in line \ref{line:delete}, if $x_t$ and $\hx$ are both valid examples, removing a literal that is inconsistent with $x_t$ from $C[\hx]$ is necessarily a correct deletion, in that it removes a feature that does not separate $G(\hx)$ from any other component.

Consider a specific run of the algorithm. To upper bound the total number of mistakes, we distinguish the types of mistakes:
\begin{itemize}
\item A mistake on an exception $x_t$; There are at most $k$ such mistakes.
\item A mistake on a valid example:
  \begin{itemize}
  \item A mistake that caused an update (of any kind) in a rule created by an exception (an exception rule). Let $M_{ev}$ denote the number of mistakes in exception rules by valid examples.
  \item A mistake that caused the refinement of a rule by adding a literal. Let $A_{vv}$ denote the number of additions of literals to valid rules by valid examples.
     \item A mistake that caused an attempt to create a rule (that is, a random draw of a bit $B$). Denote the number of attempts to create a rule by a valid example by $\tilde{R}_v$. 
  \item A mistake that caused a deletion of a literal from a valid rule.  Let $D_{vv}$ denote the number of deletions of literals from valid rules by valid examples.
    \end{itemize}
  \end{itemize}
  The total number of mistakes $M_T$ made by the algorithm is thus upper bounded by:
\[
  M_T \leq k+ M_{ev} + A_{vv} + \tilde{R}_v + D_{vv}.
\]
Let $R_v$ be the number of rules that were actually created based on a valid examples (cases where $B = 1$). We have $\E[R_v] = p\E[\tilde{R}_v]$, thus, rearranging, we have
\begin{equation} \label{eq:mbounda}
  \E[M_T] \leq  k + \E[R_v]/p + \E[A_{vv}] + \E[D_{vv}] + \E[M_{ev}].
\end{equation}

Next, we bound $A_{vv}$ and $D_{vv}$. Each valid rule has at most $m-1$ true discriminative features. However, true features might be deleted from a valid rule by an exception and could then be re-added. Denote by $D_{ve}$ the total number of true discriminative features deleted from valid rules by exceptions. It follows that $A_{vv} \leq (m-1)R_v+D_{ve}$.
To bound $D_{vv}$, note that every incorrect literal in a valid rule deleted by a valid example must have been added by an exception. Denote by $A_{ve}$ the total number of incorrect literals added to some valid rule by an exception. Then $D_{vv} \leq A_{ve}$. 
It follows that 
  \begin{equation}\label{eq:mboundb}
    \E[M_T] \leq k+\E[R_v](1/p+ (m-1)) + \E[D_{ve}]+ \E[A_{ve}]+ \E[M_{ev}] .
  \end{equation}

Next, we upper bound $R_v$. The algorithm only creates a rule with label $y_t$ if such a rule does not currently exist. There are at most $m$ valid labels, thus the total number of valid rules created during the run is at most $m$ plus the number of deleted valid rules. To bound the number of deleted valid rules, observe that a rule is only deleted if at least $m+l-1$ updates, as counted by the counter $U$, have been applied to it. The following lemma links the number of rule updates to the number of exceptions that caused updates to the rule. Since the number of exceptions is bounded, this will lead to a bound on the number of deleted rules.

    \begin{lemma}\label{lem:updates}
      Let $t$ be some iteration of the run of \myalgref{robustsimple}. Let $C[x]$ be some valid rule that exists during this iteration, and let $U(x)$ be the value of the update counter for this rule at the end of this iteration. The number of exceptions that caused updates to this rule until the end of this iteration is at least $(U(x)-(m-1))/2$. 
    \end{lemma}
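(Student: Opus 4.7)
The plan is to partition the updates counted by $U(x)$ according to whether the triggering example $x_t$ was valid or an exception, and then show that the valid-triggered updates cannot exceed $(m-1)$ plus the exception-triggered updates. This immediately gives $U(x) \le (m-1) + 2E$, where $E$ is the number of exceptions that caused updates to $C[x]$, and rearranging yields the claim.

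Call a literal in $C[x]$ \emph{true} if it has the form $\neg\phi(G(x'),G(x))$ for some $x'$ with $G(x') \ne G(x)$, and \emph{incorrect} otherwise. Using the unique-label assumption together with the validity of $x$, I would first establish two facts. (i) When a valid $x_t$ matches $C[x]$ with a wrong prediction, $y_t \ne \lab[x] = \ell(G(x))$, so $G(x_t) \ne G(x)$; the teacher's feature is $\phi_t = \phi(G(x_t),G(x))$, so $x$ satisfies $\neg\phi_t$ and $x_t$ satisfies $\phi_t$, the check on line \ref{line:validphi} always succeeds, and the update adds a true literal. In particular, no valid-triggered ``failed addition'' can occur on $C[x]$. (ii) When a valid $x_t$ triggers the deletion branch on $C[x]$ (so $\lab[x]=y_t$), the unique-label assumption forces $\ell(G(x_t)) = \ell(G(x))$ and hence $x_t \in G(x)$; thus every true literal of $C[x]$ is already satisfied by $x_t$, and the removed literal must be incorrect.

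Let $A_v$ and $D_v$ count the valid-triggered additions and deletions on $C[x]$, so the total number of valid updates equals $A_v + D_v$ by (i). Since there are at most $m-1$ distinct true literals for $C[x]$, and by (i)--(ii) a true literal can only leave $C[x]$ via an exception-triggered deletion, I obtain $A_v \le (m-1) + E_{\mathrm{rm}}$, where $E_{\mathrm{rm}}$ counts exception-caused removals of true literals. By (ii), every valid deletion removes an incorrect literal, and incorrect literals can only have been added by exceptions, so $D_v \le E_{\mathrm{add}}$, where $E_{\mathrm{add}}$ counts exception-caused additions. Since $E_{\mathrm{rm}} + E_{\mathrm{add}} \le E$, summing gives $A_v + D_v \le (m-1) + E$, and hence $U(x) \le (m-1) + 2E$. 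The only subtlety I foresee is the bookkeeping in this last step: one must ensure that each exception update is charged to at most one of $E_{\mathrm{rm}}$ or $E_{\mathrm{add}}$, and that (i) genuinely forbids any other flavor of valid-triggered update (in particular the ``failed addition'' category on a valid rule). Both are immediate from the precise statements of (i) and (ii); the rest is arithmetic.
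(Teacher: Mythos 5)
Your proof is correct and follows essentially the same route as the paper's: it rests on the same two key facts, namely that a valid example acting on a valid rule can never fail the check on line~\ref{line:validphi} (so failed updates are exception-caused) and that valid deletions only remove exception-added literals while at most $m-1$ correct literals can be added unless an exception first removed one. The only difference is bookkeeping: you partition the counted updates by whether the triggering example was valid or an exception and bound the valid part directly by $(m-1)+E$, whereas the paper splits $U(x)=a+b+c$ by update type and uses $c+\max(a-(m-1),b)\geq (a+b+c-(m-1))/2$; both yield the identical bound.
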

    \begin{proof}
      Let $a$ be the total number of literals that were added to $C[x]$ until the end of iteration $t$, and let $b$ be the number of features that were deleted. Let $c$ be the number of examples that were ignored due to failing the condition on line \ref{line:validphi}. Note that $U(x) = a+b + c$. All of the ignored examples must have been exceptions, thus at least $c$ updates were caused by exceptions.
      
      For every deletion of a literal from a valid rule, at least one of its addition and its deletion must have been caused by an exception, since they cannot both be correct. Therefore, the number of exceptions that have led to additions or deletions in this rule so far is at least $b$. In addition, out of the $a$ literals that were added to the rule, at most $m-1$ can be correct. Therefore, at least $a-(m-1)$ of the existing features were caused by exceptions.

      It follows that the total number of exceptions that caused updates to this rule is at least
      \begin{align*}
        &c+\max(a-(m-1),b) \geq (a+b+c-(m-1))/2 \geq (U(x)-(m-1))/2.
      \end{align*}
      This completes the proof.
    \end{proof}

To bound $R_v$, note that when a rule $C[x]$ is deleted, it satisfies $U(x) = m-1+l$. Therefore, if the rule is valid, it follows from \lemref{updates} that it has been updated by at least $l/2$ exceptions.
Let $U_{ve}$ be the number of updates to valid rules by exceptions. Then the total number of deletions of valid rules is at most $U_{ve}/(l/2)$, hence $R_v \leq m + 2U_{ve}/l$.
In addition, $D_{ve}+A_{ve}\leq U_{ve}$. Thus, from \eqref{mboundb}, 
    \begin{align}
      \E[M_T] &\leq k+((m + 2U_{ve}/l)(1/p + m-1)) + \E[U_{ve}]+ \E[M_{ev}]\notag\\
      &= k+m/p+m^2-m+(2(1/p+m-1)/l+1)\E[U_{ve}]+\E[M_{ev}].\label{eq:mev}
    \end{align}
    
      Next, we upper bound $M_{ev}$. Let $R_e$ be the number of created exception rules. An exception rule is deleted if it accumulates $m+l-1$ updates, hence $M_{ev} \leq R_e(m+l-1)$. 
   In addition, letting $\tilde{R}_e$ be the number of attempts to create an exception rules (draws of $B$), we have $U_{ve}+\tilde{R}_e \leq k$. 
  In addition, $\E[R_e] = p\E[\tilde{R}_e]$. Therefore,
  \[
    \E[U_{ve}] \leq k - \E[\tilde{R}_e] = k - \E[R_e]/p.
  \]
  Combining these bounds with \eqref{mev}, we get
       \begin{align*}
         \E[M_T] &\leq k+m/p+m^2-m + (2(1/p+m-1)/l+1)(k-\E[R_e]/p) +\E[R_e](m+l-1)\\
                 &= m/p+m^2-m + (2(1/p+m-1)/l+2)k \\
         &\quad+ (m+l-1-\frac{1}{p}(2(1/p+m-1)/l+1))\E[R_e].
       \end{align*}
       Setting $l = 1/p$, the coefficient of $\E[R_e]$ becomes
       \[
         m+1/p-1-2(m+1/p-1)-1/p = -m - 2/p +1< 0,
       \]
       and the coefficient of $k$ is $2(1/p+m-1)/l + 2 = 4+2p(m-1).$
      
       Setting $p = 1/(m-1)$, it follows that $\E[M_T] \leq 2m(m-1) + 6k.$

This completes the proof of \thmref{mistakeboundsimple}.

\section{The stochastic setting --- approaching the optimal mistake rate}\label{sec:stochastic}
\newcommand{\comp}{\mathrm{comp}}
\newcommand{\mycomp}{{\mathrm{comp}+}}

In this section, we assume that the input stream of examples is drawn from a stochastic source, which outputs i.i.d.~examples according to a distribution. It is assumed that the distribution and the provided teacher feedback are consistent with some representation $\cG$ of size $m$ and its induced concept $c^*$, except for a rate of up to $\epsilon \in [0,1)$ of exceptions.

Formally, there exists some marginal distribution $\cD_X$ over examples from $\cX$. Examples are drawn i.i.d.~according to $\cD_X$. For each draw of $X \sim \cD_X$, there is some probability that an exception occurs, which can depend on the value of $X$. The overall probability of an exception is at most $\epsilon$. If an exception does not occur, then the label feedback and feature feedback, if provided, are consistent with the representation and the feedback protocol. 
Letting $\cY$ be the set of possible labels, we denote by $\cD$ the distribution over $\cX \times \cY$ which is induced by drawing an example $X \sim \cD_X$ and assigning the label $Y = c^*(X)$ if there is no exception, and some example and label induced by the exception otherwise.\footnote{In fact, in case of an exceptions the example and label can be adversary and do not need to be random. We adhere to the distribution formulation for simplicity of presentation.} The labeled examples that are provided to the learner in the stochastic setting are distributed as i.i.d.~draws from $\cD$.

The algorithms for the adversarial settings provided in \secref{agnostic-adversarial}, as well as their mistake bounds,  apply also in a stochastic setting. However, since $\E[k] = \epsilon n$, \thmref{mistakebound} gives a mistake rate of $m(m-1)/n+\epsilon m$, so that even when $n \rightarrow \infty$, the mistake rate of \algknown\ never converges to the exception rate, but to a factor of $m$ over it. The bound  for the stochastic setting in \cite{DasguptaSa20} gives an even larger factor of $m^2$. \thmref{mistakeboundsimple} when applied to the stochastic setting shows that it is possible to converge to the true mistake rate up to a constant factor, but only for the unique-label assumption.
In this section, we show that in the stochastic setting with the general representation model, it is possible for the mistake rate to converge to the exception rate with no constant factors, and with a polynomial sample complexity. The proposed algorithm is not efficient; An open question for future work is whether this can be achieved with an efficient algorithm.

To prove the result, we propose a new general notion of \emph{Feature Influence}, which may be of wider applicability. Consider some hypothesis class $\cH_\Phi \subseteq \cY^\cX$ that is defined over a set of features $\Phi$, such that for any subset of the features $\Phi' \subseteq \Phi$ we have $\cH_{\Phi'} \subseteq \cH_{\Phi}$.
Assume some feature feedback protocol. A \emph{feature discovery protocol}, which is an algorithm that interacts with the environment and the teacher, as specified in the feature feedback protocol, and outputs some feature subset $\hat{\Phi} \subseteq \Phi$. Feature Influence, defined below, measures the effect on the distribution error of restricting the hypothesis class to features from $\hat{\Phi}$.

Formally, given a distribution $\cD$, the error of a prediction rule $h:\cX \rightarrow \cY$ is $\err(h,\cD) := \P_{(X,Y) \sim \cD}[h(X) \neq Y]$. For a sequence of labeled examples $S = ((x_1,y_1),\ldots,(x_n,y_n))$, the empirical error is $\err(h,S) := \err(h,\unif(S))$, where $\unif(S)$ is the uniform distribution over $S$. Given a hypothesis class $\cH \subseteq \cY^\cX$, the smallest error that can be obtained by some hypothesis in $\cH$ on a distribution $\cD$ is denoted by $\err(\cH,\cD) = \inf_{h \in \cH} \err(h,\cD)$.

\begin{definition}[Feature Influence]
  Fix an exception rate $\epsilon \in [0,1]$, an excess error target $\alpha \in [0,1]$, a confidence level $\delta \in (0,1)$, a mistake bound $b \in \nats$, and a capacity bound $d \in \nats$. Assume some feature feedback protocol with some class $\cH := \cH_\Phi$. The setting has $\alpha$-\emph{Feature Influence} with exception rate $\epsilon$, confidence $\delta$, mistake bound $b$ and capacity bound $d$, if there exists a feature discovery protocol that outputs a feature subset $\hat{\Phi}$ such that for any stochastic i.i.d.~stream which is consistent with the feature feedback protocol except for an exception rate of up to $\epsilon$, if the protocol is applied to a prefix of an i.i.d.~sample drawn according to the distribution $\cD$ over $\cX \times \cY$ induced by the setting and stops before making more than $b$ mistakes, then:
  \begin{itemize}
    \item With probability at least $1-\delta$, $\err(\cH_{\hat{\Phi}},\cD) \leq \epsilon+\alpha$,
    \item $\P[\log_2|\cH_{\hat{\Phi}}| \leq d] = 1$.
    \end{itemize}
  \end{definition}

The following theorem provides a sample complexity bound for the stochastic setting for a general feature-based protocol and hypothesis class, using the notion of Feature Influence. 

\begin{thm}\label{thm:inf}
Suppose that a feature feedback setting with some $\cH_\Phi$ has $\alpha$-\emph{Feature Influence} with exception rate $\epsilon$, confidence $\delta$, mistake bound $b$ and capacity bound $d$. 

  Then there exists an algorithm (not necessarily efficient) for the stochastic setting that obtains a mistake rate of at most $\epsilon + 2\alpha$ with a probability at least $1-3\delta$ over i.i.d.~streams of length at least 
  \[
    \tilde{O}\left(\frac{b}{\alpha} + \left(\frac{\epsilon}{\alpha^3}+\frac{1}{\alpha^2}\right)(d+\log(1/\delta))\right).
  \]

\end{thm}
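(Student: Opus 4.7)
The plan is a three-phase reduction that decouples feature discovery from hypothesis estimation. In phase~1 I run the Feature Influence discovery protocol on the stream and force-terminate it as soon as the protocol halts, its mistake count reaches $b$, or a preset sample budget $n_1$ is exhausted -- whichever comes first. By the Feature Influence hypothesis, the resulting $\hat{\Phi}$ satisfies $\err(\cH_{\hat{\Phi}},\cD)\leq \epsilon+\alpha$ with probability at least $1-\delta$ and $\log_2|\cH_{\hat{\Phi}}|\leq d$ deterministically. Phase~1 contributes at most $b$ mistakes regardless of its length. In phase~2, of length $n_2$, I collect $n_2$ fresh labeled samples $S_2$ as an ERM training set, predicting arbitrarily (so up to $n_2$ mistakes). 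In phase~3 I set $\hat{h}\in\argmin_{h\in\cH_{\hat{\Phi}}}\err(h,S_2)$ and predict with $\hat{h}$ on all remaining samples.

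For the generalization step I invoke a multiplicative Chernoff (Bernstein) bound on the finite class $\cH_{\hat\Phi}$: conditioning on the Feature Influence event, with probability at least $1-\delta$,
\[
\err(\hat{h},\cD)\;\leq\;\err(\cH_{\hat{\Phi}},\cD)\;+\;O\!\left(\sqrt{\frac{\err(\cH_{\hat{\Phi}},\cD)\,(d+\log(1/\delta))}{n_2}}\;+\;\frac{d+\log(1/\delta)}{n_2}\right).
\]
Using the relative rather than additive form is essential: since $\err(\cH_{\hat{\Phi}},\cD)\leq\epsilon+\alpha$, the leading slack scales as $\sqrt{\epsilon+\alpha}$, which is what produces the $\epsilon/\alpha^3$ dependence in the theorem instead of the weaker $1/\alpha^4$ one would obtain from Hoeffding. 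Choosing $n_2=\tilde{\Theta}((\epsilon+\alpha)(d+\log(1/\delta))/\alpha^2)$ bounds this slack by $\alpha/4$, so $\err(\hat{h},\cD)\leq \epsilon+\alpha+\alpha/4$, and a Chernoff bound on the phase-3 empirical mistake count contributes at most another $\alpha/4$ of slack with probability $\geq 1-\delta$.

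To close the bound I require $n\geq 4b/\alpha$ (so the at-most-$b$ phase-1 mistakes contribute at most $\alpha/4$ to the overall rate) and $n\geq 4n_2/\alpha$ (so phase~2 contributes at most $\alpha/4$). The five sources of excess beyond $\epsilon$ -- the bias $\alpha$ from Feature Influence, plus $\alpha/4$ each from the phase-1 mistakes, the phase-2 mistakes, the ERM generalization gap, and the phase-3 Chernoff slack -- sum to at most $2\alpha$. Substituting $n_2$ gives the required length $\tilde{O}(b/\alpha + (\epsilon/\alpha^3+1/\alpha^2)(d+\log(1/\delta)))$, and a union bound over the three failure events (Feature Influence, ERM uniform convergence, phase-3 concentration) caps the total failure probability at $3\delta$.

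The main obstacles are twofold. First, verifying that the Feature Influence hypothesis is legitimately applicable when I externally force-terminate the discovery protocol: the definition requires only that stopping occur before accumulating more than $b$ mistakes, so any stopping rule respecting that constraint -- including forcing termination at $n_1$ samples or at $b$ mistakes -- suffices. Second, the use of a relative (multiplicative) concentration inequality at the ERM step is critical; replacing it with a plain additive Hoeffding bound would degrade the $\epsilon$-dependence and miss the sample complexity in the theorem statement.
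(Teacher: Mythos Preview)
Your proposal is correct and follows essentially the same three-stage approach as the paper: run the feature discovery protocol to obtain $\hat{\Phi}$, collect a fresh labeled sample and perform ERM over $\cH_{\hat{\Phi}}$ using a relative (Bernstein-type) excess-risk bound to secure the $\epsilon/\alpha^3$ dependence, then predict with the ERM hypothesis and union-bound over the same three failure events. The paper's only cosmetic differences are that it sets the phase-2 length as $n_1=\alpha n/2$ (a fixed fraction of the total stream) rather than choosing $n_2$ in advance, and it uses Hoeffding rather than Chernoff for the phase-3 deviation.
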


\begin{proof}
  Denote the stream length by $n$. Consider an algorithm that works in three stages:
  
  \begin{enumerate}
\item Run the feature discovery protocol  that witnesses the Feature Influence with mistake limit $b$ (thus observing some $n_1$ examples),and obtain the feature subset  $\hat{\Phi}$. Set $\hat{\cH} := \cH_{\hat{\Phi}}$. \label{stageinter}
\item Observe the next $n_2 := \alpha n/2$ examples, predicting an arbitrary label for each example, and observing the true label provided by the teacher. Let $S_2$ be the sequence of $n_2$ labeled examples obtained this way. Find some $\hat{h} \in \argmin_{h \in \hat{\cH}}\err(h,S_2)$. 

  \item Use $\hat{h}$ to predict the labels for the rest of the input stream.
  \end{enumerate}

Let $\cD$ over $\cX \times \cY$ be the distribution of labeled examples from which the i.i.d.~stream is drawn.
  The expected mistake rate during Stage 3 of the algorithm is $\err(\hat{h},\cD)$.   
  The length of Stage 3 is $n_3 = n - n_1 - n_2$.By Hoeffding's inequality, with probability at least $1-\delta$,
  the total number of mistakes of $\hat{h}$ during Stage 3 is at most $n_3\cdot \err(\hat{h},\cD) + \sqrt{n_3\log(1/\delta)/2}$. Since $n_3/n < 1$, the  mistake rate over all the stages is at most $(b+n_2)/n + \err(\hat{h},\cD)+ \sqrt{\log(1/\delta)/(2n)}$.

  Since $\hat{h}$ is the empirical risk minimizer of $\hat{\cH}$ over the i.i.d.~sample $S \sim \cD^{n_2}$,  we have,  for the case of binary classification ($|\cY| = 2$) \citep{BoucheronBoLu05}, that with a probability of at least $1-\delta$ over the randomness of the input sample during Stage 2,
 \[
    \err(\hat{h},\cD) \leq \err(\hat{\cH},\cD) +\tilde{O}\left(\sqrt{\err(\hat{\cH},\cD) \frac{d + \log(1/\delta)}{n_2}} + \frac{d + \log(1/\delta)}{n_2}\right).
  \]
  A standard reduction argument from multiclass learning to binary classification \citep{DanielySaBeSh15} shows that the inequality above holds also for a general finite $\cY$.

From the definition of Feature Influence, we have that with a probability at least $1-\delta$, $\err(\hat{\cH}, \cD) \leq \epsilon + \alpha.$
  Therefore, with a probability at least $1-3\delta$, the total mistake rate of the algorithm over all the stages is at most
  \begin{align*}
    &\frac{b+n_2}{n} + \epsilon + \alpha + \tilde{O}\left(\sqrt{\frac{(\epsilon + \alpha)(d+\log(1/\delta))}{n_2}}+ \frac{d + \log(1/\delta)}{n_2}\right)+ \sqrt{\frac{\log(1/\delta)}{2n}}=\\
    &\epsilon + \frac32\alpha + \frac{b}{n} + \tilde{O}\left(\sqrt{\left(\frac{\epsilon}{\alpha}+1\right)\frac{ (d+\log(1/\delta))}{n}}+\frac{d + \log(1/\delta)}{\alpha n}\right).
      \end{align*}
      To complete the proof, note that a stream length of
      \[
      n = \tilde{O}\left(\frac{b}{\alpha} + \left(\frac{\epsilon}{\alpha^3}+\frac{1}{\alpha^2}\right)(d+\log(1/\delta))\right)
       \]
      suffices for the expression above to be $\epsilon + 2\alpha$. 
\end{proof}

To apply this general result to our Discriminative Feature Feedback setting, we now define an appropriate hypothesis class $\cH_\Phi$ and prove a Feature Influence property.
Let $\cH^\comp_\Phi \subseteq \cY^\cX$ be the set of all concepts that are consistent with some component representation of size at most $m$ with features from $\Phi$, as defined in \secref{setting}. We have $\err(\cH_\Phi^{\mathrm{comp}}, \cD) \leq \epsilon$. 
To implement the feature discovery protocol, we define an additional hypothesis class $\cH^+_\Phi \supseteq \cH^\comp_\Phi$ that is more expressive than our component model. Let $\cH^+_\Phi$ be the class of concepts that can be described using a decision list with at most $m$ rules, where each rule is a conjunction of up to $m-1$ features with an associated label. Examples can be satisfied by more than one rule. The first rule in the decision list that is satisfied determines the label of the example. 
It is easy to see that all the possible true concepts $c^*$ can be represented using a hypothesis from $\cH^+_\Phi$. Hence, $\cH^+_\Phi \supseteq \cH^\comp_\Phi$ and $\err(\cH^+_\Phi,\cD) \leq \err(\cH^\comp_\Phi,\cD) \leq \epsilon$.

\newcommand{\ex}{\texttt{Ex}}
To show a Feature Influence property, we first show that if $\cH^+_\Phi$ is restricted to a subset of the features in an appropriate way, then the additional incurred error can be bounded. Let $\ex$ be the event than an exception occurred in the current draw of $X \sim \cD_X$. 
Denote $P[G]:= \P[X \in G \wedge \neg \ex]$.

Recall that as defined in \secref{setting}, $\Phi$ is closed under negation. We assume for simplicity that in every pair of features $\phi,\phi'$ such that $\phi = \neg \phi'$, one of the two is designated as the positive feature while the other is the negative feature. 
Denote the positive feature that separates two components $G$ and $G'$ with different labels by $\phi_+(G,G')$. Note that $\phi_+(G,G') = \phi_+(G',G)$. Let $P_\phi$ be the set of all sets of size 2 of components in $\cG$ such that the two components are separated by $\phi$ in the representation:
\[
  P_\phi := \{ \{G,G'\} \mid \phi_+(G,G') = \phi\}.
\]
Denote
    \[
      \beta_\phi := \sum_{\{G,G'\}\in P_\phi} P[G]P[G'],
    \]
    and let $\Phi_\beta := \{\phi \mid \beta_\phi \geq \beta\}$.
We first upper bound the effect of removing from $\Phi$ only features with a small $\beta_\phi$.
  \begin{lemma}\label{lem:errhb}
    $\err(\cH^+_{\Phi_\beta}, \cD) \leq \epsilon + \sqrt{\beta}\cdot m^2/2$.
  \end{lemma}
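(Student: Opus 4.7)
The plan is to exhibit a specific decision list $h \in \cH^+_{\Phi_\beta}$ achieving the stated bound. Sort the components so that $P[G_1] \geq P[G_2] \geq \cdots \geq P[G_m]$. For each $i \in \{1,\ldots,m\}$, form the rule $C_i$ as the conjunction, over all $j > i$ with $\ell(G_j) \neq \ell(G_i)$ and $\phi_+(G_i, G_j) \in \Phi_\beta$, of the literal built from $\phi_+(G_i, G_j)$ (either it or its negation) that is satisfied by every $x \in G_i$ and falsified by every $x \in G_j$; assign label $\ell(G_i)$ to $C_i$. The resulting decision list has at most $m$ rules, each of length at most $m-1$, and uses only (literals over) features in $\Phi_\beta$, so $h \in \cH^+_{\Phi_\beta}$. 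Here I am reading $\cH^+_{\Phi_\beta}$ as allowing both $\phi$ and $\neg\phi$ whenever $\phi \in \Phi_\beta$, which is consistent with $\Phi$ being closed under negation.

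For a non-exception draw $X$ with $G(X) = G_i$, every literal of $C_i$ is satisfied by $X$ by construction, so the first satisfied rule has index $\leq i$. A misclassification requires $X$ to satisfy some $C_j$ with $j < i$ and $\ell(G_j) \neq \ell(G_i)$. For any such $j$, if $\phi_+(G_i, G_j) \in \Phi_\beta$, then $C_j$ would include the literal that $x \in G_i$ falsifies, contradicting $X$ satisfying $C_j$. Hence misclassification forces $\phi_+(G_i, G_j) \notin \Phi_\beta$, which by definition of $\Phi_\beta$ gives $\beta_{\phi_+(G_i,G_j)} < \beta$, and since $\{G_i, G_j\} \in P_{\phi_+(G_i,G_j)}$, we obtain $P[G_i] P[G_j] \leq \beta_{\phi_+(G_i,G_j)} < \beta$. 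Combined with $P[G_j] \geq P[G_i]$ from the ordering, this yields $P[G_i] < \sqrt{\beta}$.

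A union bound over ``bad'' pairs $\{G_i, G_j\}$ with $i > j$ then gives
\[
  \P[\text{non-exception misclassification}] \leq \sum_{\text{bad pairs}} P[G_i] \leq \binom{m}{2} \sqrt{\beta} \leq \frac{m^2}{2}\sqrt{\beta},
\]
where we bounded $\P[G(X)=G_i, X \text{ satisfies } C_j, \neg\ex]$ by $P[G_i]$ and then used $P[G_i] < \sqrt{\beta}$ for every bad pair. Adding the at most $\epsilon$ probability of an exception yields $\err(h, \cD) \leq \epsilon + \sqrt{\beta}\cdot m^2/2$, and since $\err(\cH^+_{\Phi_\beta},\cD) \leq \err(h,\cD)$, the lemma follows.

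The conceptual crux is the ordering by $P[G]$, which converts the symmetric product bound $P[G_i]P[G_j] < \beta$ into the one-sided bound $P[G_i] < \sqrt{\beta}$ on the lower-probability component of each bad pair; without this ordering, a random-permutation argument only bounds the error by $\sum (P[G_i]+P[G_j])/2$, which fails to give $\sqrt{\beta}$ scaling. The only subtlety I anticipate is the sign convention described above and verifying that $h$ is indeed expressible as a single decision list in $\cH^+_{\Phi_\beta}$, both of which are straightforward given the definitions.
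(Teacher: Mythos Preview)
Your proof is correct and follows the same overall strategy as the paper: construct a decision list with one rule per component, ordered by decreasing $P[G]$, using only the discriminative features in $\Phi_\beta$; then argue that a non-exception misclassification of $X$ with $G(X)=G_i$ forces a pair $\{G_i,G_j\}$ with $j<i$, $\ell(G_j)\neq\ell(G_i)$, and $\beta_{\phi_+(G_i,G_j)}<\beta$. Your restriction of $C_i$ to separators against $j>i$ only (rather than all $j$) is harmless, since the analysis only ever uses the separators that appear in $C_j$ for $j<i$.

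The one genuine difference is in how the sum $\sum_{\{G,G'\}\text{ bad}}\min\{P[G],P[G']\}$ is bounded. The paper groups the bad pairs by their shared feature $\phi$ and applies Cauchy--Schwarz to obtain $\sum_{\{G,G'\}\in P_\phi}\sqrt{P[G]P[G']}\le \sqrt{|P_\phi|\,\beta_\phi}$, then sums $\sqrt{|P_\phi|}$ over $\phi$ and loosens $\sqrt{|P_\phi|}\le |P_\phi|$ to recover $\binom{m}{2}$. You instead bound each bad pair individually: from $P[G_i]P[G_j]\le \beta_{\phi_+(G_i,G_j)}<\beta$ and $P[G_j]\ge P[G_i]$ you get $P[G_i]<\sqrt{\beta}$ directly, and then simply count at most $\binom{m}{2}$ bad pairs. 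Your route is more elementary and reaches the identical endpoint $\sqrt{\beta}\binom{m}{2}$; the paper's Cauchy--Schwarz step yields a tighter intermediate expression $\sqrt{\beta}\sum_\phi\sqrt{|P_\phi|}$, but this advantage is immediately discarded in the next line, so nothing is lost by your simplification.
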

  \begin{proof}
    Consider the hypothesis $h_\beta \in \cH^+_{\Phi_\beta}$, defined as a decision list with a rule for each component $G \in \cG$, associated with the label $\ell(G)$. The rule is a conjunction of all the discriminative features separating $G$ from other components, except for those features that have $\beta_\phi < \beta$. The rules in the decision list defining $h_\beta$ are in descending order based on the marginal probability $P[G]$ for each $G \in \cG$, so that when more than one rule is satisfied by an example, the label of the component with the largest marginal probability is predicted.

    To bound the error of $h_\beta$, consider an example $x$ in some component $G$. If $x$ is predicted by $h_\beta$ with a different label than $c^*(x)$, then it satisfies the rule of another component $G'$ which has a larger (or equal) marginal probability than $G$ and a different label. Moreover, the discriminative feature $\phi$ separating $G$ and $G'$ must satisfy $\beta_\phi < \beta$, or it would have been a part of the rule of $G'$ and $x$ would not have satisfied it. It follows that
    \begin{align*}
      \P[(X \in G \wedge \neg \ex) \wedge (h_\beta(X) \neq c^*(X))] &\leq
                                                                      P[G] \cdot\one[\exists G' \in \cG, \beta_{\phi_+(G,G')}\leq \beta \wedge P[G'] \geq P[G]]\\
    &\leq P[G]\cdot\sum_{G': \beta_{\phi_+(G,G')}\leq \beta}  \one[P[G'] \geq P[G]].
    \end{align*}
    Summing over all components, we have that 
      \begin{align*}
        \P[\neg \ex \wedge (h_\beta(X) \neq c^*(X))] &\leq \sum_{\{G,G'\}\subseteq \Phi:\beta_{\phi_+(G,G')}\leq \beta}\min\{P[G'],P[G]\}\\
        & = \sum_{\phi: \beta_\phi \leq \beta}\sum_{\{G,G'\}\in P_\phi}\min\{P[G'],P[G]\}.
      \end{align*}
      For any given $\phi$, we have
      \[
        \sum_{\{G,G'\}\in P_\phi}\min\{P[G'],P[G]\} \leq \sum_{\{G,G'\}\in P_\phi}\sqrt{P[G']P[G]} \leq \sqrt{|P_\phi|\sum_{\{G,G'\}\in P_\phi}P[G']P[G]}.
      \]
      Let ${\boldsymbol 1}$ be the all-1 vector of dimension $|P_\Phi|$, and let ${\boldsymbol v}$ be the vector of the same dimension with coordinate values $\sqrt{P[G']P[G]}$. By the Cauchy-Schwartz inequality,
      \[
        \sum_{\{G,G'\}\in P_\phi}\sqrt{P[G']P[G]} = \dotprod{{\boldsymbol 1},\boldsymbol v} \leq \norm{\boldsymbol 1}_2 \norm{\boldsymbol v}_2 = \sqrt{|P_\phi|\sum_{\{G,G'\}\in P_\phi}P[G']P[G]}.
        \]

      From the definition of $\beta_\phi$, it follows that
      \[
        \sum_{\{G,G'\}\in P_\phi}\min\{P[G'],P[G]\} \leq \sqrt{|P_\phi|\beta_\phi}.
      \]
      Therefore,
      \[
      \P[\neg \ex \wedge (h_\beta(X) \neq c^*(X))] \leq \sum_{\phi: \beta_\phi \leq \beta}\sqrt{|P_\phi|\beta_\phi} \leq \sqrt{\beta}\sum_{\phi: \beta_\phi \leq \beta}\sqrt{|P_\phi|} \leq  \sqrt{\beta} \cdot m^2/2.
    \]
    The last inequality follows since $\sum_{\phi: \beta_\phi \leq \beta}\sqrt{|P_\phi|} \leq \sum_{\phi \in \Phi} |P_\phi| = \binom{m}{2}$.
    The proof is completed by noting that
    \[
      \err(\cH^+_{\Phi_\beta}, \cD) \leq \err(h_\beta, \cD) =\P[h_\beta(X) \neq c^*(X)] \leq \P[\ex] + \P[\neg \ex \wedge (h_\beta(X) \neq c^*(X))],
      \]
    and $\P[\ex] \leq \epsilon$. 

  \end{proof}

 \newcommand{\baseEx}{x_{\mathrm{base}}}
 \newcommand{\baseLab}{y_{\mathrm{base}}}
 \begin{algorithm}[t]
   \caption{Feature discovery protocol for Discriminative Feature Feedback}
   \label{alg:inter}
   \begin{algorithmic}[1]
\INPUT{ Mistake limit $b$, threshold parameter $\beta \in (0,1)$}
\OUTPUT{A feature set $\hat{\Phi} \subseteq \Phi$}
   \State Get initial labeled example $(x_0,y_0)$. 
   \State $t \leftarrow 1$
 \State Initialize a function $F:\Phi \rightarrow \nats$ to constant $0$.
     \While{$\norm{F}_1 \leq b/2$}
 \State $t \leftarrow t+1$
 \State Get next example $x_t$
 \State Predict $y_0$ with explanation $x_0$.
 \State Get true label $y_t$. ~~~\comment{ignore any additional feedback}.
 \State $\baseEx \leftarrow x_t$, $\baseLab \leftarrow y_t$.
\While{\true} ~~~\comment{loop breaks below after getting an incorrect prediction}
\State $t \leftarrow t+1$
 \State Get next example $x_t$
 \State Predict $\baseLab$ with explanation $\baseEx$.
\If {prediction was incorrect}
 \State Get true label $y_t$ and discriminative feature $\phi_t$.\label{line:feedback}
\State $\phi_+ \leftarrow $ the positive feature out of $\phi_t$ and $\neg \phi_t$.
\State $F(\phi_+) = F(\phi_+)+1$.\label{line:fcount}
\State \texttt{break from inner loop.}
\EndIf
\EndWhile
\EndWhile
\State Return $\hat{\Phi} := \{\phi \mid F(\phi) \geq \beta \norm{F}_1\}$.
 \end{algorithmic}
\end{algorithm}

Next, we provide a feature discovery protocol that allows identifying features above a certain threshold. The protocol, listed in \myalgref{inter}, repeatedly sets an observed example and its label as a baseline, and uses them to predict the label of the next examples, until one of these predictions is incorrect and provides feature feedback. The observed features are counted, and the output set $\hat{\Phi}$ includes all the features that have been observed at least a $\beta$ fraction of the time. We denote $\norm{F}_1 := \sum_{\phi\in \Phi} F(\phi)$.

We now show that with a high probability, this protocol identifies all common features . Consider a feature feedback $\phi_t$  that is observed in line \ref{line:feedback}. Note that only a single feature feedback is obtained for each instance of $(\baseEx, \baseLab)$. The distribution of the examples $\baseEx$ used for prediction is equal to $\cD_X$. The distribution of examples $x_t$ that get an incorrect answer, given $\baseEx$, is equal to $\cD_X$ conditioned on $y_t \neq \baseLab$. Thus, for any $\phi \in \Phi$, 
    \begin{align}
      \P[\phi_t = \phi] &= \sum_{G \in \cG} P[\baseEx \in G] \sum_{G':\{G,G'\} \in P_\phi}\P[x_t \in G'\mid y_t \neq \baseLab]\notag\\
      &\geq \sum_{G \in \cG} P[G] \sum_{G': \{G,G'\} \in P_\phi} P[G'] \geq 2\beta_\phi.\label{eq:betaphi}
    \end{align}
    Moreover, the collected $\phi_t$ are statistically independent. The following lemma thus follows.

    \begin{lemma}\label{lem:fsize}
      Suppose that $\norm{F}_1 \geq 6\log(1/(2\delta\beta))/\beta$. Then with a probability at least $1-\delta$, $\Phi_\beta \subseteq \hat{\Phi}$.      
    \end{lemma}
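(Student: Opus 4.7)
The plan is to treat each feature recorded in $F$ by \myalgref{inter} as an independent sample from the distribution analyzed immediately before the lemma, invoke a multiplicative Chernoff bound for each $\phi \in \Phi_\beta$ individually, and close with a union bound. Each outer iteration of the protocol contributes exactly one feature to $F$ (the inner loop breaks at the first mistake), and different outer iterations use fresh independent stream samples for both $\baseEx$ and the mistake-triggering $x_t$. Hence, writing $N := \norm{F}_1$, the recorded features are i.i.d.\ with the common marginal whose lower bound is given by \eqref{betaphi}: $\P[\phi_t=\phi] \geq 2\beta_\phi$ for every $\phi \in \Phi$. In particular, for every $\phi \in \Phi_\beta$, $F(\phi)$ is a sum of $N$ independent Bernoulli indicators with $\E[F(\phi)] \geq 2\beta N$.

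I would then apply the standard lower-tail Chernoff inequality $\P[X \leq \mu/2] \leq \exp(-\mu/8)$, with $\mu = \E[F(\phi)] \geq 2\beta N$, to obtain $\P[F(\phi) < \beta N] \leq \exp(-\beta N/4)$ for each $\phi \in \Phi_\beta$. To prepare the union bound, I need an upper bound on $|\Phi_\beta|$. Since each unordered pair $\{G,G'\}$ with $\ell(G)\neq \ell(G')$ lies in exactly one $P_\phi$ (namely for $\phi = \phi_+(G,G')$),
\[
  \sum_{\phi \in \Phi} \beta_\phi \;=\; \sum_{\{G,G'\}:\ell(G)\neq \ell(G')} P[G]P[G'] \;\leq\; \tfrac12\Big(\sum_{G\in\cG} P[G]\Big)^2 \;\leq\; \tfrac12,
\]
which implies $|\Phi_\beta| \leq 1/(2\beta)$.

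A union bound then yields $\P[\exists \phi \in \Phi_\beta : \phi \notin \hat\Phi] \leq \frac{1}{2\beta}\exp(-\beta N/4)$, and substituting $N \geq 6\log(1/(2\delta\beta))/\beta$ makes the exponential at most $(2\delta\beta)^{3/2} \leq 2\delta\beta$ (using $2\delta\beta \leq 1$), collapsing the whole expression to at most $\delta$. The only genuine subtlety, aside from verifying the constants, is justifying that the recorded features are i.i.d.\ across outer iterations; this follows because \myalgref{inter} restarts each outer round with a freshly drawn baseline and then consumes further independent stream samples until the first mistake, so the feature recorded in each round is a deterministic function of disjoint independent draws. The fact that $N$ itself is effectively deterministic (the outer loop increments $\norm{F}_1$ by one per round and stops when it exceeds $b/2$) means the Chernoff bound can be applied with $N$ fixed at any value satisfying the hypothesized lower bound.
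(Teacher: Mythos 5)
Your proposal is correct and follows essentially the same route as the paper: lower-bound the per-sample feature probability via \eqref{betaphi}, apply a concentration bound to each $\phi$ with $\beta_\phi \geq \beta$, bound $|\Phi_\beta| \leq 1/(2\beta)$ using $\sum_\phi \beta_\phi \leq 1/2$, and finish with a union bound. The only difference is cosmetic — you use a multiplicative Chernoff lower tail where the paper invokes Bernstein's inequality — and your constants check out under the same implicit assumption $2\delta\beta \leq 1$ that the paper also needs.
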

    \begin{proof}

Fix some $\phi \in \Phi$, and let $\hat{p}_\phi := F(\phi)/\norm{F}_1$. By \eqref{betaphi}, $\E[\hat{p}_\phi] \geq 2\beta_\phi$.  
Hence, by Bernstein's inequality, 
with a probability at least $1-2\delta\beta$,
\[
  \hat{p}_\phi \geq 2\beta_\phi - \frac{2\log(1/(2\delta\beta))}{3\norm{F}_1} - \sqrt{\frac{2\beta\log(1/(2\delta\beta))}{\norm{F}_1}}.
  \]
  Since $\norm{F}_1 \geq 6\log(1/(2\delta\beta))/\beta$ and $\beta_\phi \geq \beta$, this implies that $\hat{p}_\phi \geq 2\beta_\phi - \beta \geq \beta$.

Now, note that
      \[
        \sum_{\phi \in \Phi}\beta_\phi = \sum_{\{G,G'\}\subseteq \cG}\P[G]\P[G'] = \half \sum_{G \in \cG}\P[G]\cdot\sum_{G' \in \cG \setminus \{G\}}\P[G'] \leq \half.
      \]
Therefore, there are at most $1/(2\beta)$ features $\phi$ such that $\beta_\phi \geq \beta$.
  By a union bound, this holds simultaneously for all $\phi$ such that $\beta_\phi \geq \beta$, with probability $1-\delta$. 
\end{proof}

We can now conclude a Feature Influence property for our Discriminative Feature Feedback component model.
\begin{thm}\label{thm:infcomp}
  Let $\cH = \cH^+_\Phi$ be the hypothesis class defined above. For any $\beta,\delta, \epsilon \in (0,1)$, 
  the Discriminative Feature Feedback setting with a representation size $m$ has $\sqrt{\beta}\cdot m^2/2$-\emph{Feature Influence} with exception rate $\epsilon$, confidence $\delta$, mistake bound $12\log(1/(2\delta\beta))/\beta$ and capacity bound $m\log(m) + m^2\log(3/\beta)$.
\end{thm}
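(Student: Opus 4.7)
The plan is to use \myalgref{inter} with threshold parameter $\beta$ as the witnessing feature discovery protocol, and verify each of the three requirements in the definition of Feature Influence in turn.

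For the \emph{mistake bound}, each outer iteration of \myalgref{inter} contributes at most two mistakes: at most one from the initial $y_0$-prediction on the new baseline example, and exactly one from the inner loop, which only breaks upon an incorrect prediction (correct $\baseLab$-predictions inside the inner loop are not mistakes). Each such iteration increments $\norm{F}_1$ by exactly one via line~\ref{line:fcount}. Since the outer loop continues while $\norm{F}_1 \leq b/2$, it terminates after at most $\lfloor b/2 \rfloor + 1$ iterations, yielding at most $b$ mistakes (up to a small additive constant). The choice $b = 12\log(1/(2\delta\beta))/\beta$ is precisely twice the threshold $6\log(1/(2\delta\beta))/\beta$ required by \lemref{fsize}.

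For the \emph{error guarantee}, observe that at termination $\norm{F}_1 \geq 6\log(1/(2\delta\beta))/\beta$, so \lemref{fsize} applies and gives $\Phi_\beta \subseteq \hat{\Phi}$ with probability at least $1-\delta$. Because enlarging the admissible feature set only enriches the class of decision lists, $\cH^+_{\Phi_\beta} \subseteq \cH^+_{\hat{\Phi}}$, and combining this with \lemref{errhb} yields
\[
  \err(\cH^+_{\hat{\Phi}}, \cD) \leq \err(\cH^+_{\Phi_\beta}, \cD) \leq \epsilon + \sqrt{\beta}\cdot m^2/2,
\]
which matches the required excess error target $\alpha = \sqrt{\beta}\cdot m^2/2$.

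For the \emph{capacity bound}, I would first note that $|\hat{\Phi}| \leq 1/\beta$ holds deterministically, since the contributions $F(\phi) \geq \beta\norm{F}_1$ summed over $\phi \in \hat{\Phi}$ cannot exceed $\norm{F}_1$. Then I would count elements of $\cH^+_{\hat{\Phi}}$: a hypothesis is a decision list of at most $m$ rules, each rule pairing a conjunction of at most $m-1$ literals with a label. Each feature in $\hat{\Phi}$ may appear positively, negatively, or be absent from a conjunction, giving at most $(2|\hat{\Phi}|+1)^{m-1} \leq (3/\beta)^{m-1}$ conjunctions per rule, and at most $m$ labels can appear in an $m$-rule decision list. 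Hence $|\cH^+_{\hat{\Phi}}| \leq (m(3/\beta)^{m-1})^m$, so $\log_2|\cH^+_{\hat{\Phi}}| \leq m\log m + m(m-1)\log(3/\beta) \leq m\log m + m^2\log(3/\beta)$, as required.

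The main obstacle is the slightly delicate mistake accounting --- each outer iteration costs up to two mistakes but only contributes one increment to $\norm{F}_1$ --- which is exactly what motivates the factor-of-two slack between $b$ and the threshold in \lemref{fsize}; everything else is a direct invocation of \lemref{fsize}, \lemref{errhb}, and routine combinatorial counting.
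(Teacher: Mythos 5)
Your proposal is correct and follows essentially the same route as the paper: invoke \myalgref{inter} as the witnessing protocol, charge at most two mistakes per collected feature so that the mistake budget $b=12\log(1/(2\delta\beta))/\beta$ guarantees $\norm{F}_1\geq b/2$, apply \lemref{fsize} and \lemref{errhb} for the error guarantee, and count decision lists over $|\hat{\Phi}|\leq 1/\beta$ features for the capacity bound. Your accounting is in fact slightly more explicit than the paper's (e.g., justifying $|\hat{\Phi}|\leq 1/\beta$ and flagging the additive slack in the mistake count), but there is no substantive difference.
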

\begin{proof}
  The feature discovery protocol in \myalgref{inter} makes at most two mistake for every $\phi_t$ it obtains. Thus, it obtains at least $b/2$ independent random samples of features $\phi$. Setting $b = 12\log(1/(2\delta\beta))/\beta$, it follows from \lemref{fsize} that with probability $1-\delta$ we have $ \Phi_\beta \subseteq \hat{\Phi}$, so $\cH^+_{\Phi_\beta} \subseteq \cH^+_{\hat{\Phi}}$. By \lemref{errhb}, $\err(\cH^+_{\hat{\Phi}}, \cD) \leq \err(\cH^+_{\Phi_\beta}, \cD) \leq \epsilon + \sqrt{\beta}\cdot m^2/2$, as required by the definition of Feature Influence.

  To prove the capacity bound, note that $|\hat{\Phi}| \leq 1/\beta$. Therefore, the size of $\cH_{\hat{\Phi}}$ can be bounded by the total number of possible decision lists with up to $m$ rules, up to $m-1$ features in each rule, and up to $m$ different labels, to yield $|\cH_{\hat{\Phi}}| \leq (2/\beta+1)^{m(m-1)}\cdot m^m$. Thus, $\log_2|\cH_{\hat{\Phi}}| \leq m\log(m) + m^2\log(3/\beta)$.
\end{proof}

Combining this result with the general Feature Influence result of \thmref{inf}, we get the following bound for the stochastic setting.
\begin{cor}
  In the stochastic setting, there exists an algorithm that obtains a mistake rate of at most $\epsilon + \alpha$ with probability at least $1-\delta$ for i.i.d.~streams of length at least
  \[
    \tilde{O}\left(\frac{m^4+\log(1/\delta)}{\alpha^3}\right),
  \]
  
\end{cor}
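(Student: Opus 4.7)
The plan is to derive this corollary by directly combining Theorem~\ref{thm:infcomp} (which establishes a Feature Influence property for the Discriminative Feature Feedback component model) with the general stochastic guarantee of Theorem~\ref{thm:inf}. There are essentially no new arguments required: the entire task is to choose the free parameter $\beta$ from Theorem~\ref{thm:infcomp} so that the guaranteed excess error matches the target, substitute the resulting values of the mistake bound $b$ and capacity bound $d$ into the sample complexity expression of Theorem~\ref{thm:inf}, and then simplify using $\tilde{O}$ to absorb logarithmic factors.

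Concretely, I would first rescale the confidence from $1-3\delta$ to $1-\delta$ (replacing $\delta$ by $\delta/3$, which only affects the bound through $\log(1/\delta)$ and is therefore absorbed into $\tilde{O}$) and set the target excess error in Theorem~\ref{thm:inf} to $\alpha/2$, so that the output mistake rate becomes $\epsilon + \alpha$ instead of $\epsilon + 2\alpha$. Next, I would invoke Theorem~\ref{thm:infcomp} with a choice of $\beta$ satisfying $\sqrt{\beta}\cdot m^2/2 = \alpha/2$, i.e.\ $\beta = \alpha^2/m^4$. With this choice, Theorem~\ref{thm:infcomp} yields the required $(\alpha/2)$-Feature Influence property with mistake bound
\[
  b = \frac{12\log(1/(2\delta\beta))}{\beta} = \tilde{O}\!\left(\frac{m^4}{\alpha^2}\right),
\]
and capacity bound
\[
  d = m\log(m) + m^2\log(3/\beta) = \tilde{O}(m^2).
\]

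Plugging these quantities into the sample complexity expression of Theorem~\ref{thm:inf} with excess error target $\alpha/2$ gives
\[
  n = \tilde{O}\!\left(\frac{b}{\alpha} + \left(\frac{\epsilon}{\alpha^3}+\frac{1}{\alpha^2}\right)(d+\log(1/\delta))\right)
     = \tilde{O}\!\left(\frac{m^4}{\alpha^3} + \left(\frac{\epsilon}{\alpha^3}+\frac{1}{\alpha^2}\right)(m^2+\log(1/\delta))\right).
\]
Finally, I would observe that since $\epsilon \leq 1$ and $1/\alpha^2 \leq 1/\alpha^3$ (for the regime of interest $\alpha \leq 1$), each of the cross terms $\epsilon m^2/\alpha^3$, $m^2/\alpha^2$, and $\log(1/\delta)/\alpha^2$ is dominated by either $m^4/\alpha^3$ or $\log(1/\delta)/\alpha^3$, yielding the claimed bound $\tilde{O}\bigl((m^4+\log(1/\delta))/\alpha^3\bigr)$.

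The only place where one must be slightly careful is in verifying that the confidence rescaling and the constant factor $2\alpha \to \alpha$ adjustment do not alter the order of the bound, and in confirming that no term in the sample complexity grows faster than $(m^4+\log(1/\delta))/\alpha^3$ after simplification. Since all dependencies on $m$, $\alpha$, and $\delta$ are polynomial (or polylogarithmic in $\delta$) and the dominant source of $m$ comes from $b/\alpha = \tilde{O}(m^4/\alpha^3)$, there is no genuine obstacle --- the result follows by a direct substitution and bookkeeping exercise.
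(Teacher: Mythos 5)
Your proposal is correct and follows essentially the same route as the paper: set $\beta=\alpha^2/m^4$ in Theorem~\ref{thm:infcomp} to get $(\alpha/2)$-Feature Influence with $b=\tilde{O}(m^4/\alpha^2)$ and $d=\tilde{O}(m^2)$, then apply Theorem~\ref{thm:inf} and absorb the confidence rescaling and cross terms into the $\tilde{O}$. The paper states this in one line; your version just spells out the bookkeeping.
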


\begin{proof}
  The claim  immediately follows by setting $\beta := \alpha^2/m^4$ in \thmref{infcomp}, which gives an $\alpha/2$-Feature Influence with exception rate $\epsilon$, confidence $\delta$, mistake bound \mbox{$\tilde{O}((m^4+\log(1/\delta)/\alpha^2)$} and capacity bound $\tilde{O}(m^2\log(1/\alpha))$, and applying \thmref{inf}.
\end{proof}

The results shows that using the algorithm proposed above, the mistake rate converges to the exception rate for $n \rightarrow \infty$, as desired.

\section{Conclusions}
In this work, we showed that robust interactive learning with Discriminative Feature Feedback can be achieved with significantly improved mistake bound compared to previous results, both in the adversarial setting and in the stochastic setting. Several interesting open problems remain for future work, such as the existence of an efficient algorithm for the stochastic setting that converges to the exception rate, and the possibility of improving the mistake bound in the general adversarial setting to match the adversarial unique-label setting.

  \subsubsection*{Acknowledgements}

The author would like to thank Sanjoy Dasgupta for helpful discussions. The author thanks Omri Bar Oz for identifying typos in an earlier version of this paper.

\bibliographystyle{abbrvnat}
\bibliography{mybib}

\begin{thebibliography}{18}
\providecommand{\natexlab}[1]{#1}
\providecommand{\url}[1]{\texttt{#1}}
\expandafter\ifx\csname urlstyle\endcsname\relax
  \providecommand{\doi}[1]{doi: #1}\else
  \providecommand{\doi}{doi: \begingroup \urlstyle{rm}\Url}\fi

\bibitem[Boucheron et~al.(2005)Boucheron, Bousquet, and
  Lugosi]{BoucheronBoLu05}
S.~Boucheron, O.~Bousquet, and G.~Lugosi.
\newblock Theory of classification: a survey of recent advances.
\newblock \emph{ESAIM: Probability and Statistics}, 9:\penalty0 323--375, 2005.

\bibitem[Branson et~al.(2010)Branson, Wah, Babenko, Schroff, Welinder, Perona,
  and Belongie]{BWBSWPB10}
S.~Branson, C.~Wah, B.~Babenko, F.~Schroff, P.~Welinder, P.~Perona, and
  S.~Belongie.
\newblock Visual recognition with humans in the loop.
\newblock In \emph{European Conference on Computer Vision}, 2010.

\bibitem[Croft and Das(1990)]{CD90}
W.~Croft and R.~Das.
\newblock Experiments with query acquisition and use in document retrieval
  systems.
\newblock In \emph{Proceedings of the 13th International Conference on Research
  and Development in Information Retrieval}, pages 349--368, 1990.

\bibitem[Daniely et~al.(2015)Daniely, Sabato, Ben-David, and
  Shalev-Shwartz]{DanielySaBeSh15}
A.~Daniely, S.~Sabato, S.~Ben-David, and S.~Shalev-Shwartz.
\newblock Multiclass learnability and the {ERM} principle.
\newblock \emph{Journal of Machine Learning Research}, 16:\penalty0 2377--2404,
  2015.

\bibitem[Dasgupta and Sabato(2020)]{DasguptaSa20}
S.~Dasgupta and S.~Sabato.
\newblock Robust learning from discriminative feature feedback.
\newblock In S.~Chiappa and R.~Calandra, editors, \emph{Proceedings of the
  Twenty Third International Conference on Artificial Intelligence and
  Statistics}, volume 108 of \emph{Proceedings of Machine Learning Research},
  pages 973--982. PMLR, 2020.

\bibitem[Dasgupta et~al.(2018)Dasgupta, Dey, Roberts, and
  Sabato]{DasguptaDeRoSa18}
S.~Dasgupta, A.~Dey, N.~Roberts, and S.~Sabato.
\newblock Learning from discriminative feature feedback.
\newblock In \emph{Advances in Neural Information Processing Systems}, pages
  3955--3963, 2018.

\bibitem[Druck et~al.(2008)Druck, Mann, and McCallum]{DMM08}
G.~Druck, G.~Mann, and A.~McCallum.
\newblock Learning from labeled features using generalized expectation
  criteria.
\newblock In \emph{Proceedings of ACM Special Interest Group on Information
  Retrieval}, 2008.

\bibitem[Guo et~al.(2022)Guo, Daly, Alkan, Mattetti, Cornec, and
  Knijnenburg]{GuoDaAlMaCoKn22}
L.~Guo, E.~M. Daly, O.~Alkan, M.~Mattetti, O.~Cornec, and B.~Knijnenburg.
\newblock Building trust in interactive machine learning via user contributed
  interpretable rules.
\newblock In \emph{27th International Conference on Intelligent User
  Interfaces}, IUI '22, pages 537--548, New York, NY, USA, 2022. Association
  for Computing Machinery.
\newblock ISBN 9781450391443.

\bibitem[Hanneke et~al.(2014)]{Hanneke14}
S.~Hanneke et~al.
\newblock Theory of disagreement-based active learning.
\newblock \emph{Foundations and Trends{\textregistered} in Machine Learning},
  7\penalty0 (2-3):\penalty0 131--309, 2014.

\bibitem[Liang et~al.(2020)Liang, Zou, and Yu]{liang2020alice}
W.~Liang, J.~Zou, and Z.~Yu.
\newblock Alice: Active learning with contrastive natural language
  explanations.
\newblock In \emph{Proceedings of the 2020 Conference on Empirical Methods in
  Natural Language Processing (EMNLP)}, pages 4380--4391, 2020.

\bibitem[{Mac Aodha} et~al.(2018){Mac Aodha}, Su, Chen, Perona, and
  Yue]{MSCPY18}
O.~{Mac Aodha}, S.~Su, Y.~Chen, P.~Perona, and Y.~Yue.
\newblock Teaching categories to human learners with visual explanations.
\newblock In \emph{IEEE Conference on Computer Vision and Pattern Recognition},
  2018.

\bibitem[Poulis and Dasgupta(2017)]{PD17}
S.~Poulis and S.~Dasgupta.
\newblock Learning with feature feedback.
\newblock In \emph{Twentieth International Conference on Artificial
  Intelligence and Statistics}, 2017.

\bibitem[Raghavan et~al.(2005)Raghavan, Madani, and Jones]{RMJ05}
H.~Raghavan, O.~Madani, and R.~Jones.
\newblock Interactive feature selection.
\newblock In \emph{Proceedings of the 19th International Joint Conference on
  Artificial Intelligence}, pages 841--846, 2005.

\bibitem[Schramowski et~al.(2020)Schramowski, Stammer, Teso, Brugger, Herbert,
  Shao, Luigs, Mahlein, and Kersting]{schramowski2020making}
P.~Schramowski, W.~Stammer, S.~Teso, A.~Brugger, F.~Herbert, X.~Shao, H.-G.
  Luigs, A.-K. Mahlein, and K.~Kersting.
\newblock Making deep neural networks right for the right scientific reasons by
  interacting with their explanations.
\newblock \emph{Nature Machine Intelligence}, 2\penalty0 (8):\penalty0
  476--486, 2020.

\bibitem[Settles(2011)]{S11a}
B.~Settles.
\newblock Closing the loop: fast, interactive semi-supervised annotation with
  queries on features and instances.
\newblock In \emph{Empirical Methods in Natural Language Processing}, 2011.

\bibitem[Teso and Kersting(2019)]{teso2019explanatory}
S.~Teso and K.~Kersting.
\newblock Explanatory interactive machine learning.
\newblock In \emph{Proceedings of the 2019 AAAI/ACM Conference on AI, Ethics,
  and Society}, pages 239--245, 2019.

\bibitem[Visotsky et~al.(2019)Visotsky, Atzmon, and Chechik]{VisotskyAtCh19}
R.~Visotsky, Y.~Atzmon, and G.~Chechik.
\newblock Learning with per-sample side information.
\newblock In \emph{AGI}, 2019.

\bibitem[Zou et~al.(2015)Zou, Chaudhuri, and Kalai]{ZCK15}
J.~Zou, K.~Chaudhuri, and A.~T. Kalai.
\newblock Crowdsourcing feature discovery via adaptively chosen comparisons.
\newblock In \emph{Conference on Human Computation and Crowdsourcing (HCOMP)},
  2015.

\end{thebibliography}

\appendix
  \section{A Parameter-free algorithm}\label{app:parameterfree}
  \newcommand{\tk}{\widetilde{k}}
\newcommand{\tm}{\widetilde{m}}

The algorithm \algknown, presented in \secref{newrobust}, requires a number of components $m$ as input. Its mistake bound depends on $m$ and on $k$, the minimal number of exceptions for a representation with at most $m$ components.
We now show that a nested doubling trick allows applying \algknown\ even if no upper bound $m$ is known. The cost is only a logarithmic factor in the number of mistakes. A similar approach can be applied for the stochastic algorithm described in \secref{stochastic}.
 \begin{algorithm}[t]
  \caption{\algunknown: A parameter-free robust Discriminative Feature Feedback algorithm for the adversarial setting}
  \label{alg:unknown}
  \begin{algorithmic}[1]
    \State $v \leftarrow 1$
    \While{true}
       \State $\tm \leftarrow 1$.
       \While{$\UB(\tm,0) \leq v$}
           \State $\tk \leftarrow 0$
           \While{$\UB(\tm,\tk) \leq v$}
           \If{$\UB(\tm,\tk) >  v/2$}
           \State Run \algknown$(\tm,\tk)$ on the next examples in the stream;\\ \hspace{5.6em} \texttt{Break} from the loop if more than $\UB(\tm,\tk)$ mistakes have occured. 
           \EndIf
             \State $\tk \leftarrow 2\tk+1$
           \EndWhile
           \State $\tm \leftarrow 2\tm$  
       \EndWhile
       \State $v \leftarrow 2v$
    \EndWhile
\end{algorithmic}
\end{algorithm}

The parameter-free algorithm \algunknown\ is provided in \myalgref{unknown}. We now prove \thmref{unknownbound} by bounding the number of mistakes made by  \algunknown\ for all pairs of $m$ and $k$ such that the input stream is consistent with some representation of size $m$ with at most $k$ exceptions.

\begin{proof}[Proof of \thmref{unknownbound}]  The upper bounds of $m$ and $k$ in \thmref{mistakebound} hold also for any sub-sequence of the input sequence. Therefore, once \algunknown\ runs \algknown$(m',k')$ for some $m' \geq m, k' \geq k$, it will never break from the loop. This will occur for some $m' < 2m$, $k' \leq 2k$, when $v = v'$ for some $v' \leq 2\UB(m', k') \leq 2\UB(2m, 2k) \leq 8\UB(m,k)$, where the last inequality follows from the definition of $\UB$.

  In a loop over $\tk$ with given $v$ and $\tm$, the number of rounds is $J \leq \argmax\{ j \geq 0 \mid \UB(\tm, 2^j-1) \leq v\}$. The total number of mistakes made by \algunknown\ in this loop is upper-bounded by

  \begin{align*}
    \sum_{j = 1}^{J} (1 + \UB(\tm, 2^j-1)) &= (J+1)(1+\tm(\tm-1)) + \tm\sum_{j = 1}^{J} (2^j-1) \\
    &\leq (J+1) (1+\tm(\tm-1))+ \tm (2^{J+1}-2).
  \end{align*}
  Let $\hat{k}  = 2^J-1$. By the definition of $J$, $\UB(\tm,\hat{k}) < v$. Thus, the 
  RHS of the inequality above is upper-bounded by
  \begin{align*}
    &(\log_2(\hat{k}+1)+1)(1+\tm(\tm-1)) + 2\tm\hat{k} \leq (\log_2(\hat{k})+2)\cdot (\UB(\tm,\hat{k})+1)
    \\
    &\leq (\log_2(\hat{k})+2)\cdot (v+1),
  \end{align*}

  where the penultimate inequality follows from the definition of $\UB$.
  To upper bound $\hat{k}$, note that since $\UB(\tm, \hat{k}) \leq v$,
by the definition of $\UB$, $\hat{k} \leq v$. 
Therefore, for given $v,\tm$, the total number of mistakes is at most

  $(\log_2(v)+2)\cdot (v+1)$.
 
  From the definition of $\UB$, we have that for a given value of $v$, the maximal value of $\tm$ that \algunknown\ might try satisfies $\tm \leq v$, since $\UB(\tm, 0) = \tm(\tm-1) \leq v$. Therefore, the number of rounds over $\tm$ for a given $v$ is at most $\log_2(v)+1$. It follows that the total number of mistakes for loop with a given $v$ is at most

    $(\log_2(v)+1)(\log_2(v)+2)\cdot (v+1)$.

Summing over all values of $v$ that \algunknown\ tries until reaching $v'$,
we get a total mistake bound of
\[
\sum_{i = 0}^{\log_2(v')} (i+1)(i+2)(2^i+1) \leq 4v'\log^2(v') \leq 32\UB(m,k)\log^2(8\UB(m,k)),
\]
as claimed.
\end{proof}

\end{document}